\newtheorem{theorem}{Theorem}
\newtheorem{lemma}{Lemma}
\newtheorem{definition}{Definition}
\newtheorem{prop}{Proposition}
\def\x{\mathbf{x}}
\def\y{\mathbf{y}}
\journal{Pattern Recognition}
\begin{document}

\begin{frontmatter}

%% Title
\title{NeuronSeek: On Stability and Expressivity of Task-driven Neurons}

%% Authors and affiliations
%% use optional labels to link authors explicitly to addresses:
%% \author[label1,label2]{}

\author[cuhk]{Hanyu Pei\fnref{equal}}
% \ead{hanyupei@cityu.edu.hk}

\author[cuhk,polyu]{Jing-Xiao Liao\fnref{equal}}
% \ead{jinliao}

\author[riken]{Qibin Zhao}

\author[hust]{Ting Gao}

\author[polyu2]{Shijun Zhang}

\author[polyu]{Xiaoge Zhang\corref{cor2}}
\ead{xgzhang@polyu.edu.hk}

\author[cuhk]{Feng-Lei Fan\corref{cor2}}
\ead{fenglfan@cityu.edu.hk}

% \cortext[cor1]{Co-first authors.}
\cortext[cor2]{Co-corresponding authors.}
\fntext[equal]{Co-first authors.}

\affiliation[cuhk]{organization={Frontier of Artificial Networks (FAN) Lab, Department of Data Science, \\ City University of Hong Kong},
            city={Hong Kong},
            country={SAR of China}}

\affiliation[polyu]{organization={Department of Industrial and Systems Engineering, The Hong Kong Polytechnic University},
            city={Hong Kong},
            country={SAR of China}}

\affiliation[riken]{organization={RIKEN Center for Advanced Intelligence Project},
            city={Tokyo},
            country={Japan}}

\affiliation[hust]{organization={Center for Mathematical Sciences, and School of Mathematics and Statistics, \\ Huazhong University of Science and Technology},
            city={Wuhan},
            country={China}}

\affiliation[polyu2]{organization={Department of Applied Mathematics, The Hong Kong Polytechnic University }, 
            city={Hong Kong},
            country={SAR of China}}

%% Abstract
\begin{abstract}
Drawing inspiration from the human brain's functional specialization, some recent advances in NeuroAI have ventured into \textit{task-driven neurons}, which tailor neuronal formulations to specific tasks. However, existing prototypes rely heavily on Symbolic Regression (SR) via Genetic Programming (GP), which is plagued by the intrinsic weaknesses of the genetic programming such as the instability in searching for the optimal solution. To transcend these bottlenecks, we propose a paradigm shift from evolutionary heuristics to differentiable optimization. We propose NeuronSeek via Tensor Decomposition (NS-TD), a novel framework that reformulates the discrete search for neuronal structures into a continuous, low-rank tensor optimization problem. This transformation not only guarantees convergence stability but also significantly accelerates the discovery of better aggregation functions. Furthermore, we provide a rigorous theoretical foundation by proving that modifying aggregation functions---while retaining common activation functions---grants the network a ``super-super-expressive'' capability: approximating any continuous function with an arbitrarily small error using a fixed number and magnitude of parameters. Extensive empirical evaluations demonstrate that NS-TD establishes a new state-of-the-art, not only strictly dominating traditional SR-based methods in stability but also  exhibiting competitive performance relative to state-of-the-art machine learning models. The code is available at \url{https://github.com/HanyuPei22/NeuronSeek}.
\end{abstract}

%%Graphical abstract
%\begin{graphicalabstract}
%\includegraphics{grabs}
%\end{graphicalabstract}

%%Research highlights
\begin{highlights}
\item We propose a stable neuron discovery paradigm by differentiable tensor optimization.
\item We prove that task-driven neurons possess ``super-super-expressive'' property with fixed number and magnitude of parameters.
\item Our method achieves superior performance and enhanced stability compared to traditional symbolic regression methods and other state-of-the-art baselines across diverse benchmarks.
\end{highlights}

%% Keywords
\begin{keyword}
Neuronal diversity \sep Tensor decomposition \sep Task-driven neuron \sep Neural network
\end{keyword}

\end{frontmatter}

%% Add \usepackage{lineno} before \begin{document} and uncomment
%% following line to enable line numbers
%\linenumbers

%% main text

\section{Introduction}
\label{sec:intro}

Deep learning has achieved significant success in a wide range of fields~\cite{bengio2021machine,lample2020deep}. Credits for these accomplishments largely go to the development of scaling law, wherein basic computational units such as convolutions and attention mechanisms are designed and then scaled per a large-scale architecture. The resultant large network will embrace higher intelligence than human's feature engineering given well-curated big data. Prominent examples include ResNet~\cite{he2016deep}, Transformer~\cite{vaswani2017attention}, and Mamba~\cite{macavaney2021mamba}.
Recently, motivated by neuroscience's paramount contributions to AI, an emerging field called \textit{NeuroAI} has garnered significant attention, which draws upon the principles of biological circuits in the human brain to catalyze the next revolution in AI~\cite{zador2023catalyzing}. The ambitious goal of NeuroAI is rooted in the belief that the human brain, as one of the most intelligent systems, inherently possesses the capacity to address complex challenges in AI development~\cite{fan2023towards,marcus2020next}. Hence, it can serve as a valuable source of inspiration for practitioners, despite the correspondence between biological and artificial neural networks sometimes is implicit.

Following NeuroAI, let us analyze the scaling law through the lens of brain computation. It is seen that the human brain generates complex intellectual behaviors through the cooperative activity of billions of mutually connected neurons with varied morphologies and functionalities~\cite{peng2021morphological}. This suggests that the brain simultaneously benefits from scale at the macroscopic level and neuronal diversity at the microscopic level. The latter is a natural consequence of stem cells' directed programming in order to facilitate the task-specific information processing. Inspired by this observation, extensive research has shown that incorporating neuronal diversity into artificial neural networks can markedly enhance their capabilities \cite{fan2023towards, xu2022quadralib, xu2025dmixnet}. For instance, researchers have transitioned beyond traditional inner-product neurons by integrating diverse nonlinear aggregation functions, such as quadratic neurons \cite{xu2022quadralib}, polynomial neurons \cite{chrysos2020deep}, and dendritic neurons \cite{xu2025dmixnet}.
Such networks have achieved superior performance in tasks like image recognition and generation, underscoring the promise and practicality of innovating a neural network at the level of basic computational units.

Along this direction, driven by the task-specific nature of neurons in our brain, recent research~\cite{fan2024no} introduced a systematic framework for task-driven neurons. Hereafter, we refer to prototyping task-driven neurons as \textit{NeuronSeek}. This approach assumes that there is no single type of neurons that can perform universally well. Therefore, it enables neurons to be tailored for specific tasks through a two-stage process.

\begin{figure}[!htbp]
    \centering
\includegraphics[width=0.65\linewidth]{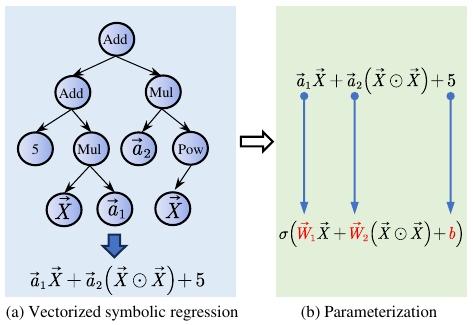}
    \caption{Task-driven neuron based on the vectorized symbolic regression (NS-SR).}
    \label{fig:preliminary}
\end{figure}

As shown in Fig.~\ref{fig:preliminary}, in the first stage, the vectorized symbolic regression (SR)---an extension of symbolic regression~\cite{schmidt2009distilling, bartlett2023exhaustive, NUNEZMOLINA2026104471} that enforces uniform operations for all variables---identifies the optimal homogeneous neuronal formula from data. Unlike traditional linear regression and symbolic regression, the vectorized SR searches for coefficients and homogeneous formulas at the same time, which can facilitate parallel computing. In the second stage, the derived formula serves as the neuron's aggregation function, with learnable coefficients. Notably, activation functions remain unmodified in task-driven neurons. At the neuronal level, task-driven neurons are supposed to have superior performance than those one-size-fits-all units do, as it imbues the prior information pertinent to the task. More favorably, task-driven neurons can still be connected into a network to fully leverage the power of \textit{connectionism}. Hereafter, we refer to task-driven neurons using symbolic regression as NS-SR. NeuronSeek considers in tandem neuronal importance and scale. From design to implementation, it systematically integrates three cornerstone AI paradigms: \textit{behaviorism} (the genetic algorithm runs symbolic regression), \textit{symbolism} (symbolic regression searches a formula from data), and \textit{connectionism} (task-based neurons are connected into a network). Despite this initial success, the following two major challenges remain unresolved in prototyping task-driven neurons:

\begin{itemize}

\item \textbf{Instability of Discrete Heuristic Search.} Existing frameworks rely on Genetic Programming (GP) to navigate the vast space of mathematical expressions. This \textbf{discrete, tree-based heuristic approach} suffers from non-deterministic convergence, particularly in high-dimensional feature spaces. The discovered formulas are often fragile---highly sensitive to random seed initialization, population size, and mutation rates---rendering the resulting neurons inconsistent and difficult to reproduce. This trial-and-error nature fundamentally hurts practitioners' trust on task-driven neurons.

\item \textbf{Necessity of Task-driven Neurons.} In the realm of deep learning theory, it was shown \cite{yarotsky2021elementary, 2021Deep, Shen_2021} that certain activation functions can empower a neural network using a fixed number of neurons to approximate any continuous function with an arbitrarily small error. These functions are termed ``super-expressive'' activation functions. Such a unique and desirable property allows a network to achieve a precise approximation without increasing structural complexity. Since task-based neurons revise aggregation functions instead of activation functions, it is necessary to figure out if the super-expressive property can be achieved via revising the aggregation function while retaining common activation functions?

\end{itemize}

In this study, we address the above two issues satisfactorily. On the one hand, in response to the instability of NS-SR, we utilize a tensor decomposition (TD) method for neuronal discovery, referred to as NS-TD. This approach begins by assuming that the optimal representation of data is encapsulated by a high-order polynomial coupled with trigonometric functions. We construct the basic formulation and apply TD to optimize its low-rank structure and coefficients. Therefore, we reformulate the unstable formula search
problem (NS-SR) into a stable low-rank coefficient optimization task. To enhance robustness, we introduce the sparsity regularization in the decomposition process, automatically eliminating insignificant terms while ensuring the framework to identify the optimal formula. Additionally, rank regularization is employed to derive the simplest possible representation. The improved stability of our method stems from two key factors: i) Unlike conventional symbolic regression which suffers from complex hyperparameter tuning in genetic programming~\cite{makke2023symbolic}, NS-TD renders significantly fewer tunable parameters. ii) It has been proven that TD with rank regularization tends to yield a unique solution~\cite{kolda2009tensor}, thereby effectively addressing the inconsistency issues inherent in symbolic regression methods.

On the other hand, we close the theoretical deficit by showing that task-driven neurons that use common activation such as ReLU can also achieve the super-expressive property. Earlier theories like \cite{yarotsky2021elementary, 2021Deep, Shen_2021} first turn the approximation problem into the point-fitting problem, and then uses the dense trajectory to realize the super-expressive property. The key message is the existence of a one-dimensional dense trajectory to cover the space of interest. In this study, we highlight that dense trajectory can also be achieved by adjusting aggregation functions. Specifically, we integrate task-driven neurons into a discrete dynamical system defined by the layer-wise transformation $\x_N=T(\x_{N-1})$, where $T$ is the mapping performed by one layer of the task-driven network. There exists some initial point $\x_0$ that can be taken by $T$ to the neighborhood of any target point $\x$. To approximate $\x$, our construction does not need to increase the values of parameters in $T$ and the network, and we just compose the transformation module $T$ different times. This means that we realize ``super-super-expressiveness'': not only the number of parameters but also the magnitudes remain fixed. In summary, our contributions are threefold:

\begin{itemize}
    \item \textbf{Paradigm Shift to Differentiable Search.} Our work suggests a paradigm shift in neuronal structure discovery by transitioning from discrete evolutionary heuristics to a differentiable tensor optimization framework. Extensive evaluations demonstrate that this continuous optimization approach significantly surpasses traditional symbolic regression methods, achieving superior fitting capability and search efficiency while guaranteeing convergence stability.

    \item \textbf{Theoretical Guarantee of ``Super-Super-Expressiveness''.} We provide a rigorous proof that task-driven neurons, even when using standard activation functions (e.g., ReLU), possess the ``super-super-expressive'' property. Distinct from prior constructions that only fix the number of parameters, our theory demonstrates that approximation capability holds even when both the \textit{number} and \textit{magnitude} of parameters are fixed, solidifying the theoretical foundation of the field.

    \item \textbf{Superior Performance and Robustness.} Systematic experiments on public datasets and real-world applications demonstrate that NS-TD achieves not only superior performance but also enhanced stability compared to NS-SR and traditional symbolic regression methods in the task of neuronal structure search. Moreover, its predictive performance remains competitive relative to other state-of-the-art baselines.

\end{itemize}

\section{Related Works}

\subsection{Neuronal Diversity and Task-driven Neurons}

Many groundbreaking AI advancements were inspired by the computing of biological neural systems. For instance, the neocognitron~\cite{fukushima2019recent}, a precursor to convolutional neural networks, is exactly a miniature of cortical visual circuits. The human brain is renowned for its extensive array of neurons, which vary significantly in morphology and functionality. This neuronal diversity underpins the brain's intelligent behavior~\cite{deraedt2021statistical}. However, traditional artificial neural networks often prioritize the scaling law that designs a fundamental unit and replicates it across large architectures. This approach underestimates the importance of the diversity of basic computational units. Recently, the idea of introducing neuronal diversity emerged, which mainly modifies the aggregation function instead of the activation function in constructing new neurons. This operation is well-grounded: Once an activation function is monotonic, a neuron's decision boundary is exclusively shaped by its aggregation function. In particular, it replaces the inner product of a neuron with a nonlinear function such as a polynomial. Table~\ref{table:quadratic_neurons} summarizes the recently proposed non-linear neurons. Notably, the complexity of neurons in \cite{zoumpourlis2017non, jiang2020nonlinear,mantini2021cqnn} is of $\mathcal{O}(n^2)$, which is much larger than the conventional neuron, while neurons in \cite{goyal2020improved,bu2021quadratic,xu2022quadralib,fan2018new} favorably enjoy the linear parametric complexity.

\begin{table}[ht]
\centering
\caption{A summary of the recently-proposed neurons. $\sigma(\cdot)$ is the nonlinear activation function. $\odot$ denotes Hadamard product. $\mathbf{W} \in \mathbb{R}^{n \times n}$, $\mathbf{w_i} \in \mathbb{R}^{n \times 1}$, and the bias terms in these neurons are omitted for simplicity.}
\begin{tabular}{lcc}
\hline
\textbf{Works}          & \textbf{Formulations}                                       \\ \hline
Zoumponuris et al. (2017) ~\cite{zoumpourlis2017non} & $\y = \sigma (\x^\top \mathbf{W} \x + \mathbf{w}^\top \x)$                  \\
Fan et al. (2018)~\cite{fan2018new}        & $\y = \sigma((\mathbf{w}_1^\top \x)(\mathbf{w}_2^\top \x) + \mathbf{w}_3^\top (\x \odot \x))$ \\
Jiang et al. (2019)~\cite{jiang2020nonlinear}       & $\y = \sigma(\x^\top  \mathbf{W} \x)$                             \\
Mantini \& Shah (2021)~\cite{mantini2021cqnn}    & $\y = \sigma(\mathbf{w}^\top (\x \odot \x))$                                \\
Goyal et al. (2020)~\cite{goyal2020improved}      & $\y = \sigma(\mathbf{w}^\top (\x \odot \x))$                                \\
Bu \& Karpante (2021)~\cite{bu2021quadratic}    & $\y = \sigma((\mathbf{w}_1^\top \x)(\mathbf{w}_2^\top \x))$                 \\
Xu et al. (2022)~\cite{xu2022quadralib}         & $\y = \sigma((\mathbf{w}_1^\top \x)(\mathbf{w}_2^\top \x) + \mathbf{w}_3^\top \x)$ \\
Fan et al. (2024) \cite{fan2024no} & Task-driven polynomial \\
\hline
\end{tabular}
\label{table:quadratic_neurons}
\end{table}
\newpage
Following the way paved for task-specific neuronal design \cite{fan2024no}, this work constructs neurons by tensor decomposition. It simultaneously resolves theoretical deficits and improves the empirical stability of the original framework, representing a substantial advance in handling high-dimensional learning tasks like image recognition. While Chrysos et al.~\cite{chrysos2021deep, chrysos2022augmenting, chrysos2023regularization} focus on designing polynomial network architectures, our work differs by concentrating on task-specific neuronal construction and more diverse functional forms such as trigonometric functions ($\sin(\x)$, $\cos(\x)$).

\subsection{Symbolic Regression and Deep Learning}

Symbolic regression (SR) is an algorithm that searches for formulas from data~\cite{schmidt2009distilling}. Recent research in SR includes learning partial differential equations (PDEs) from data~\cite{kiyani2023framework}, improving the validity of SR on high-dimensional data \cite{sahoo2018learning}, and invertible SR \cite{tohme2024isr}. Besides, the Kolmogorov-Arnold Network (KAN) is an exemplary work that combines symbolic regression and deep learning~\cite{liu2024kan}. KAN is inspired by the celebrated Kolmogorov-Arnold representation theorem, which states that any continuous multivariate function can be represented as a superposition of several continuous univariate functions. Such a decomposability lays KAN a solid theoretical foundation for function approximation. Specifically, Kolmogorov's theorem asserts that for any vector  $\x=\left[x_{1}, x_{2}, \cdots, x_{d}\right]^{T} \in[0,1]^{d}$, a continuous function $f(\x)$ can be expressed as
\begin{equation}
f(x_1,x_2,\cdots,x_d)=\sum_{i=0}^{2 d} g_{i}\left(\sum_{j=1}^{d} h_{i, j}\left(x_{j}\right)\right),
\end{equation}
where $g_{i}$ and $h_{i,j}$ are continuous univariate functions. Building upon this theorem, KAN is a neural network to exploit this decomposition by learning $g_{i}$ and $h_{i,j}$, which can be regarded as a special symbolic regression. Currently, this KAN-type line has been successfully used in a myriad of domains, such as medical image analysis \cite{li2025u} and scientific discovery \cite{toscano2025pinns}.

Task-based neurons have fundamental differences from KAN in two aspects: i) Our work modifies what a single neuron computes. KAN modifies how the entire graph is connected and parameterized. KANs completely remove the concept of a "neuron" with a fixed activation function. Instead, they place learnable univariate functions on the edges (weights), which sticks to a fixed topology. Thus, our work is of two stages, which is flexible in network structure design, which can fully leverage the power of \textit{connectionism}. ii) Our work explicitly follows a "one-for-one" philosophy. Users must re-run VSR for each new task/dataset to prototype a specific neuron. In contrast, KAN aims to be a universal approximator architecture; users do not redesign the KAN formula for a specific task. Lastly, it is important to note that the concept of task‑based neurons \cite{fan2024no}—using symbolic regression to pre‑design a neuron’s aggregation function—was developed independently and earlier. \cite{fan2024no} was finished and submitted in Nov 2023. Due to the subsequent grant review process, its dissemination was delayed. Hence, the appearance of KAN precedes our arXiv posting, but not our initial conception. We view KAN as an exciting parallel development that, together with our work, underscores the importance of symbolic computing in neural networks.

Moreover, deep learning is also applied to strengthen the performance of SR algorithms. Kamienny et al.~\cite{kamienny2022end} combined the Transformer and symbolic regression to enhance its searching ability. Kim et al.~\cite{kim2021integration} integrated the neural network with symbolic regression, and proposed a framework similar to the equation learner network. SymbolicGPT, a Transformer-based language model for symbolic regression~\cite{valipour2022symbolicgpt}, is also shown to be a competent model compared with the existing models in terms of accuracy and run time.

To summarize, \textit{symbolism} is one of the most influential paradigms in the history of AI~\cite{newell1956logic}. It is rooted in the idea that intelligence can be modeled through the manipulation of symbols---abstract representations of objects, concepts, or relationships in the real world. This paradigm assumes that reasoning and problem-solving can be achieved through symbolic manipulation. \textit{Symbolism} and \textit{connectionism} are highly complementary ~\cite{smolensky1987connectionist}. Incorporating techniques in \textit{symbolism} can effectively solve the intrinsic problems in \textit{connectionism} such as efficiency, robustness, and forgetting. Our work synergizes symbolic is a novel practice in fusing \textit{symbolism} and \textit{connectionism}.

\subsection{Super-expressive Activation Functions}

These days, super-expressive activation functions gain lots of traction in deep learning approximation theory~\cite{savchenko2020exponential}. Unlike the traditional activation functions, super-expressive activation functions can empower a network to achieve an arbitrarily small error with a fixed network width and depth. The idea of super-expressive activation function dates back to \cite{MAIOROV199981}, while the activation function therein was not in closed form. Recently, several studies proposed explicit super-expressive activation functions~\cite{yarotsky2021elementary, 2021Deep, Shen_2021}, such as $\sin(x), \arcsin(x)$, and $x-\mathtt{floor}(x)$ with $x>0$. The key to achieving super-expressiveness lies in leveraging the dense orbit to solve the point-fitting problem $(n,f_n)$, where $f_n\in [0,1], n=1,2,\cdots,N$. The fact that there exists $\theta$ such that $f_n=\theta/(\pi+n)-\mathtt{floor}(\theta/(\pi+n))$ was used in \cite{2021Deep}.

Our work shows that modifying the aggregation function can even endow a network with the super-super-expressive property. Particularly, our theoretical construction is much more efficient in memory. Previously, though the number of parameters like $\theta$ is fixed in \cite{2021Deep}, $\theta$ grows large when the approximation error goes low, and thus the essential network memory is still larger. In contrast, our construction maintains both a fixed number and magnitude of parameters, which brings a significant memory saving. Our work provides a strong theoretical foundation for task-driven neurons. Our proof is inspired by \cite{pmlr-v202-zhang23ad} that also composes a fixed block to do approximation. \cite{pmlr-v202-zhang23ad} uses the bit extract technique, while ours uses the chaotic theory.

\subsection{Task-driven Neuron Based on Vectorized Symbolic Regression}

Symbolic Regression (SR) aims to discover a mathematical expression that best describes a given dataset. Conventional SR methods typically treat each input variable independently, searching for heterogeneous formulas for different features. While flexible, this approach leads to an exponential growth in search space complexity as the input dimension increases. To mitigate this, the concept of Vectorized SR was introduced in the previous work NS-SR~\cite{fan2024no}. The core idea is to impose a homogeneity constraint, enforcing that all input variables share the same underlying transformation rule. This regularization significantly reduces the search space complexity from exponential to linear with respect to the input dimension. As illustrated in Fig.~\ref{fig:preliminary}, frameworks like NS-SR typically follow a two-stage process: i) identifying the optimal homogeneous neuronal formula by fitting task-specific data; 2) implementing the derived formula as the neuron's aggregation function.

However, existing implementations of Vectorized SR predominantly rely on Genetic Programming (GP) as the search engine. While effective in low-dimensional settings, GP-based optimization faces critical bottlenecks in high-dimensional deep learning tasks. The discrete nature of evolutionary search operators, such as crossover and mutation, struggles to traverse the vast combinatorial space efficiently, often leading to premature convergence to local optima. Furthermore, GP-based methods are inherently sensitive to hyperparameters and initialization, resulting in unstable and inconsistent formula discoveries across different runs. These limitations necessitate a more stable, differentiable approach for discovering neuronal formulas, which motivates our proposed NS-TD framework.

\section{Task-driven Neuron Based on Tensor Decomposition (NS-TD)}
\label{sec:nstd}

\subsection{Overview of NS-TD}

To address the instability of GP-based methods, we propose NS-TD, a differentiable framework for neuronal formula discovery. As illustrated in Fig.~\ref{fig:overview}, distinct from the entangled search space in NS-SR, we mathematically decouple the search process into two parallel streams: a \textit{Polynomial Stream} that explicitly models the non-linear contributions of individual features (e.g., polynomial powers), and an \textit{Interaction Stream} dedicated to capturing cross-feature correlations of varying orders via tensor decomposition. This separation ensures that the model can efficiently capture dominant signals without being overwhelmed by the complexity of high-order interactions.

\begin{figure*}[!ht]
    \centering
    \includegraphics[scale=1]{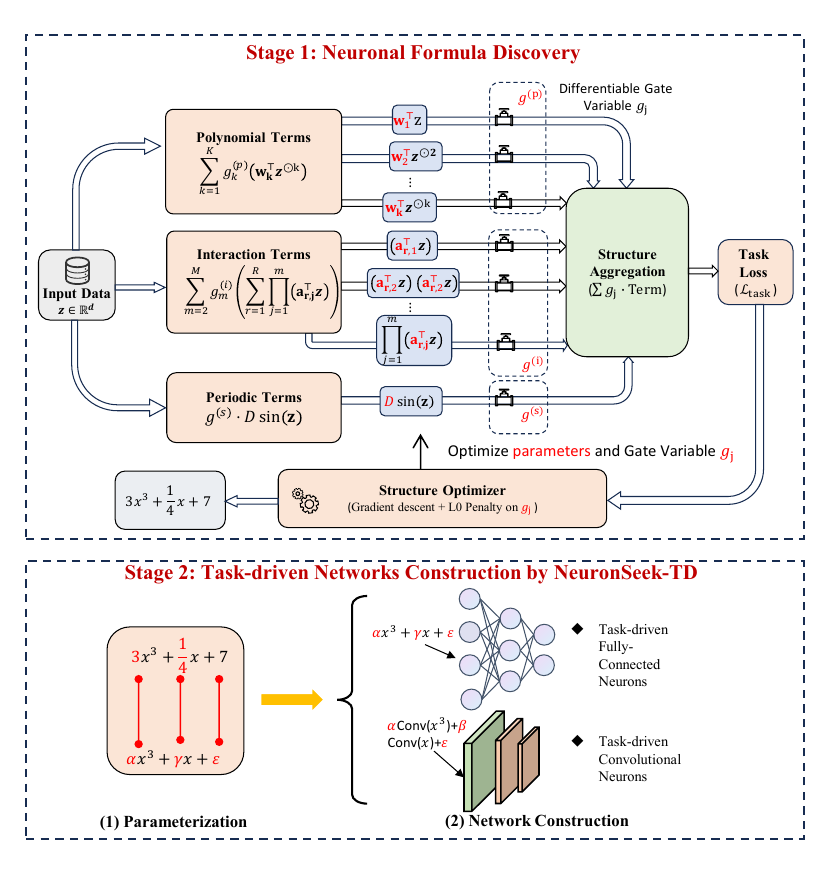}
    \caption{The overall framework of the proposed method. In the first stage, the input data, regardless of tables and images, are flattened into a vector representation and processed using an initial formula for neuronal search. A stable formula is then generated through CP decomposition. In the second stage, the neuronal formula is parameterized and integrated into various neural network backbones for task-specific applications.}
    \label{fig:overview}
\vspace{-0.5cm}
\end{figure*}

\newpage

\subsection{Neuronal Structure Searching}

In the first stage, we aim to discover the optimal mathematical structure for the neuron's aggregation function. Given the input vector $\mathbf{z} \in \mathbb{R}^{d}$, the function $f(\mathbf{z})$ is defined as a combination of polynomial, interaction, and periodic terms, regulated by learnable gates. The general formulation is expressed as
\begin{equation}
\begin{aligned}
    f(\mathbf{z}) &= \underbrace{\sum_{k=1}^{K} g_k^{(p)} \left( \mathbf{w}_k^\top \mathbf{z}^{\odot k} \right)}_{\text{Polynomial Stream}}
    + \underbrace{\sum_{m=2}^{M} g_m^{(i)} \left( \sum_{r=1}^{R} \prod_{j=1}^{m} (\mathbf{a}_{r, j}^\top \mathbf{z}) \right)}_{\text{Interaction Stream}}  + \underbrace{g^{(s)} \cdot D \sin(\mathbf{z})}_{\text{Periodic Term}},
\end{aligned}
\label{eqn:dual_stream_unified}
\end{equation}
where $g^{(\cdot)} \in \{0, 1\}$ denotes the binary gate for term selection. The first component is the Polynomial Stream, where $\mathbf{w}_k^\top \mathbf{z}^{\odot k}$ models the independent $k$-th order polynomial effect, with $\mathbf{z}^{\odot k}$ denoting the element-wise power. The second component corresponds to the Interaction Stream, which utilizes CP decomposition to approximate the $m$-th order feature interactions efficiently. Finally, the periodic term $D\sin(\mathbf{z})$ is included to capture high-frequency patterns.

\subsubsection{Computation of Stream Components}

In this section, we detail the computational mechanisms underlying the polynomial and interaction streams defined above. The computation for the polynomial and periodic streams is straightforward, as they primarily involve element-wise operations. The polynomial term computes $\mathbf{z}^{\odot k}$ by raising each element of the input vector to the power of $k$ independently, while the periodic term applies the sine function to the input vector. These operations maintain a linear computational complexity with respect to the input dimension.

The core computational challenge lies in the interaction stream, which models the high-order coupling between features. A naive implementation of an $m$-th order interaction would require a dense weight tensor $\mathcal{W}^{[m]} \in \mathbb{R}^{d \times \dots \times d}$ with $d^m$ parameters. To mitigate this exponential complexity, we employ the Canonical Polyadic (CP) decomposition. This method factorizes the high-dimensional weight tensor into a summation of rank-1 component tensors. Mathematically, the weight tensor $\mathcal{W}^{[m]}$ is approximated as
\begin{equation}
    \mathcal{W}^{[m]} = \sum_{r=1}^{R} \mathbf{a}_r^{(1)} \circ \mathbf{a}_r^{(2)} \circ \dots \circ \mathbf{a}_r^{(m)},
    \label{eqn:cp_tensor_decomp}
\end{equation}
where $R$ is the decomposition rank, $\circ$ denotes the vector outer product, and $\mathbf{a}_r^{(j)} \in \mathbb{R}^d$ represents the factor vector for the $j$-th mode of the $r$-th rank component. An illustration of a rank-$R$ CP decomposition for a third-order tensor is presented in Fig.~\ref{fig:proofsekcth1}.

\begin{figure}[H]
    \centering
    \includegraphics[width=\linewidth]{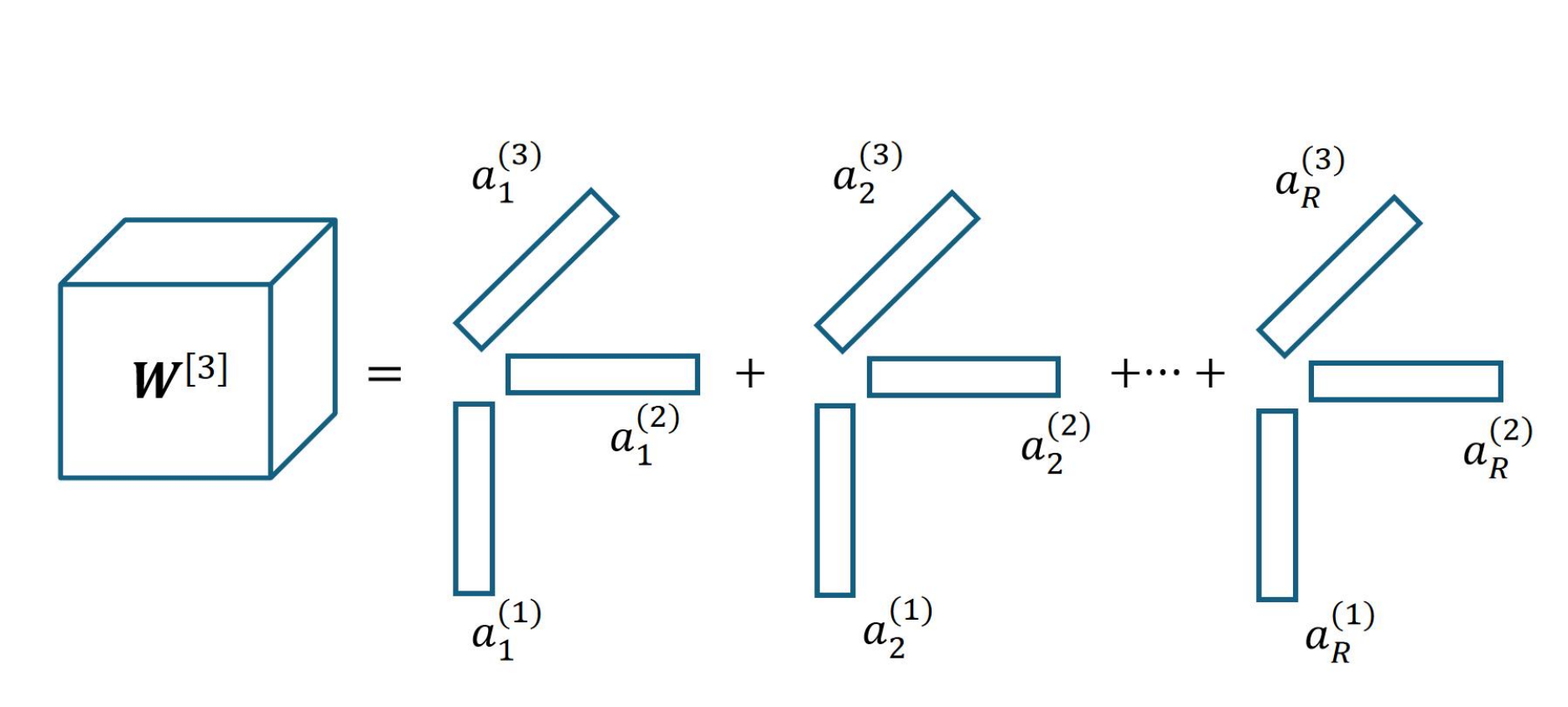}
    \caption{A rank-R CP decomposition of a third-order tensor. }
    \label{fig:proofsekcth1}
\end{figure}

This decomposition provides significant structural and computational advantages. By substituting Eq.~\eqref{eqn:cp_tensor_decomp} into the interaction term, the computation transforms from a high-cost tensor contraction into a series of efficient vector inner products, as formulated in Eq.~\eqref{eqn:dual_stream_unified}. Computationally, this reduces the parameter requirements from $O(d^m)$ to $O(mRd)$. In contrast to other decomposition methods such as Tucker decomposition, which has a complexity of $O(R^m + mRd)$, CP decomposition eliminates the core tensor, thereby ensuring that the model complexity scales linearly with the input dimension $d$. This efficiency allows NS-TD to handle high-order interactions in high-dimensional spaces that are otherwise computationally intractable.

\subsubsection{Differentiable Structure Search}

To effectively identify the optimal structural configuration from the comprehensive formulation in Eq.~\eqref{eqn:dual_stream_unified}, we employ a differentiable search strategy based on the $L_0$ regularization framework proposed by Louizos et al.~\cite{louizos2018learning}. By assigning a stochastic binary gate to each candidate component in the polynomial, interaction, and periodic streams, we transform the discrete term selection problem into a continuous optimization task.

We parameterize the activation probability of each gate using a learnable structural parameter. This formulation enables the gradients to backpropagate directly from the task loss to these parameters, allowing the optimizer to jointly tune the model weights (i.e., $\mathbf{w}$ and $\mathbf{a}$) and the gate probabilities. The global objective function balances predictive accuracy with structural parsimony:
\begin{equation}
    \mathcal{J} = \sum_{(\mathbf{z}, y) \in \mathcal{D}} \mathcal{L}_{task}(f(\mathbf{z}), y) + \lambda \sum_{g} \mathcal{R}(g),
\end{equation}
where $\mathcal{L}_{task}$ is the task-specific loss (e.g., MSE or Cross-Entropy), and the second term aggregates the expected $L_0$ norm penalty $\mathcal{R}(g)$ over all structural gates defined in Eq.~\eqref{eqn:dual_stream_unified}. Through this mechanism, the model automatically suppresses redundant polynomial orders or interaction ranks by driving their associated gate probabilities to zero, effectively performing feature selection during training.

Crucially, to adapt this mechanism to the sensitivity of high-order polynomial and interaction learning, we implement a tailored two-phase optimization protocol. Direct application of sparsity regularization from the onset often leads to the premature pruning of complex terms (e.g., high-rank interactions) before they can capture relevant features. To mitigate this, we initially freeze the structural gates in an open state, focusing the optimization exclusively on training the model tensors to capture the dataset characterization. This warm-up phase ensures that all candidate terms, particularly those in the interaction stream utilizing CP decomposition, develop meaningful representations. In the subsequent phase, we unfreeze the gate parameters and introduce the sparsity regularization, allowing the model to prune terms based on their fully developed representational capacity.

\subsection{Task-driven Neural Network Construction}

Upon identifying the optimal sparse structure via the differentiable search in the first stage, we proceed to construct the task-driven neural network. This phase instantiates the discovered symbolic structure into a learnable neuronal architecture. Specifically, we extract the active index sets $\mathcal{K}$ and $\mathcal{M}$ for the retained polynomial and interaction terms, along with the inclusion status of the periodic term. Distinct from the search phase, here we assign fresh, independent trainable weights to these selected components to maximize representation learning during the final training.

Mathematically, the output $y$ of the constructed neuron is defined as
\begin{equation}
\begin{aligned}
    y = \sigma\Bigg( &\sum_{k \in \mathcal{K}} \mathbf{W}_k^{(p)} (\mathbf{z}^{\odot k}) + \sum_{m \in \mathcal{M}} \mathbf{W}_m^{(i)} \left( \sum_{r=1}^{R} \prod_{j=1}^{m} (\mathbf{a}_{r, j}^\top \mathbf{z}) \right) + \mathbf{W}^{(s)} \sin(\mathbf{z}) + b \Bigg),
\end{aligned}
\end{equation}
where $\sigma(\cdot)$ is the activation function (e.g., ReLU). $\mathbf{W}^{(\cdot)}$ represents the re-initialized trainable weights for the preserved terms. This formulation unifies the processing of tabular and visual data: for tabular inputs, $\mathbf{W}$ indicates scalar coefficients or weight matrices; for image inputs, operations are extended to convolutions.

To ensure training stability for these higher-order architectures, we employ a hierarchical initialization strategy inspired by Chrysos et al.~\cite{chrysos2023regularization}. Parameters corresponding to linear terms (i.e., $k=1$) are initialized using a normal distribution $\mathcal{N}(0, \sigma^2)$ to preserve variance. In contrast, weights for higher-order polynomial terms and interaction terms are initialized with values close to zero (e.g., $10^{-3}$). This approach ensures that the network initially approximates a linear model, gradually integrating non-linear and interactive complexities during the training process. The efficacy of such initialization has been validated in prior studies on quadratic neurons~\cite{fan2023expressivity}.

\section{Super-expressive Property Theory of Task-driven Neurons}

In this section, we formally prove that task-driven neurons that modify the aggregation function can also achieve the super-expressive property. Our theory is established based on the chaotic theory. Mathematically, we have the following theorem:

\begin{theorem}
   Let  $f \in C([0, 1])$  be a continuous function. Then, for any $\varepsilon>0$, there exists a function $h$ generated by a task-driven network with a fixed number of parameters, such that
\begin{equation}
    |f(x)-h(x)|<\varepsilon \quad \text { for any } x \in[0, 1].
\end{equation}
\label{main}
\end{theorem}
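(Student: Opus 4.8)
The plan is to reduce the uniform approximation of $f$ to a finite point-fitting problem and then to solve that problem by iterating a single fixed task-driven block. First I would use the uniform continuity of $f$ on $[0,1]$: given $\varepsilon>0$, choose $N$ so large that, partitioning $[0,1]$ into $N$ equal subintervals $I_0,\dots,I_{N-1}$ and setting $f_n:=f(x_n)$ at their left endpoints, the step function $\tilde f(x)=f_n$ for $x\in I_n$ obeys $\|f-\tilde f\|_\infty<\varepsilon/2$. It then suffices to construct a network $h$ that maps every $x\in I_n$ to within $\varepsilon/2$ of $f_n$, which splits into an index-extraction map $x\mapsto n$ and a point-fitting map $n\mapsto f_n$, both to be realized by composing one fixed aggregation block.

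The engine of the construction is a task-driven neuron whose aggregation function is a chaotic self-map of $[0,1]$, the natural candidate being the logistic map $T(u)=4u(1-u)$, which is exactly a degree-two aggregation and, through the conjugacy $u=\tfrac12(1-\cos(\pi\theta))$, connects to the available $\cos/\sin$ term. Taking $T$ as the layer-wise transformation $\x_n=T(\x_{n-1})$, I would lean on two dynamical facts. (i) Topological transitivity, strengthened to the specification property enjoyed by $T$ at parameter $r=4$, guarantees an initial point whose orbit comes arbitrarily close to any prescribed finite list of targets $f_0,\dots,f_{N-1}$ at a suitable increasing sequence of times; this realizes the point-fitting values without enlarging any weight. (ii) Sensitive dependence on initial conditions lets repeated composition of the \emph{bounded} map $T$ manufacture the sharp transitions that would otherwise require large weights, which is exactly the mechanism that keeps every parameter magnitude fixed.

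To turn this qualitative density into a concrete network I would augment the scalar state with a counter coordinate and iterate a single fixed block a fixed number of times. The index $n$, read from $x$ by a bounded sawtooth assembled from the admissible $\sin$ term, is loaded into the counter; each layer advances the chaotic coordinate by one step while the counter is positive and freezes the output once it reaches zero, so that after $N$ identical applications the frozen coordinate samples the orbit at time $n$ and thus approximates $f_n$. Because every layer reuses the \emph{same} block $T$, the number of distinct parameters is independent of $N$, hence of $\varepsilon$; and because the whole dynamics is confined to $[0,1]$, all iterates and weights remain in a fixed bounded range, so their magnitudes are fixed as well. This is precisely the claimed super-super-expressive behavior, and adding the $\varepsilon/2$ from the step-function reduction yields $|f(x)-h(x)|<\varepsilon$ on $[0,1]$.

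The hard part will be the quantitative, uniform-in-$x$ control, since transitivity and density are only qualitative: I expect the crux to be a shadowing/specification estimate bounding the orbit gaps needed to visit all $N$ targets simultaneously to accuracy $\varepsilon/2$, together with a Lipschitz accounting of how per-step errors propagate through the chaotic block. Equally delicate is showing that sensitive dependence supplies the required sharpening with genuinely bounded weights—so that the index extraction and the freeze gate are realized without covertly smuggling large magnitudes back into the counter—which is the single point on which the ``fixed magnitude'' claim ultimately rests.
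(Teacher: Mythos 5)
Your proposal follows the paper's own proof skeleton almost exactly: the same reduction of uniform approximation to a finite point-fitting problem via an equal partition and a step function (the paper's Steps 1--2, with $\phi_1(x)=\lfloor Kx\rfloor$ playing the role of your index extraction), and the same engine for solving the point-fitting problem, namely a chaotic polynomial self-map of $[0,1]$ realized by a task-driven quadratic aggregation whose single fixed block is composed many times. Your logistic map $T(u)=4u(1-u)$ is precisely the simplest instance of the paper's Lemma 1, where a general polynomial is cut, flipped, and scaled into a unimodal map and a transitive point with dense orbit is extracted via topological transitivity plus a Baire-category argument. The differences are internal to this shared approach: you invoke the specification property where the paper settles for bare transitivity, and you make the variable-length composition explicit with a counter-and-freeze gadget where the paper simply says the block representing $T$ is copied and composed $m(k)$ times.

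One step of your write-up, as literally stated, would fail. You load the interval index $n$ into the counter and claim that after $N$ identical applications the frozen coordinate ``samples the orbit at time $n$ and thus approximates $f_n$.'' But $T^{n}(\xi)$ has no reason to be near $f_n$: density of the orbit only guarantees, for each $n$, \emph{some} visit time $m(n)$ (generically huge and wildly non-monotone in $n$) with $|T^{m(n)}(\xi)-f_n|<\varepsilon/2$; this is why the paper composes $T$ exactly $m(k)$ times, while quietly leaving unexplained how $k\mapsto m(k)$ is realized with fixed parameters. The repair is available inside your own proposal: the specification property of the full logistic map lets you prescribe the visit times up to a uniformly bounded gap, e.g.\ arrange $m(n)=nG$ for a single fixed $G=G(\varepsilon)$, so the counter should be decremented once every $G$ applications of the block (equivalently, loaded with $nG$ rather than $n$). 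With that correction your gadget computes the right quantity and in fact tightens a looseness in the paper's own argument. Conversely, your closing worry about quantitatively shadowing all $N$ targets ``simultaneously'' is unnecessary: the point-fitting set is finite, so qualitative density already supplies per-target visit times, exactly as the paper uses it.
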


\textbf{Proof sketch}: Our proof leverages heavily the framework of \cite{2021Deep} that turns via the interval partition the function approximation into the point-fitting problem. Hence, for consistency and readability, we inherit their notations and framework. The proof idea is divided into three steps:

\textit{Step 1.} As Fig.~\ref{fig:proofsketch} shows, in equal distance, divide $[0,1)$ into small intervals $\mathcal{J}_{k}=\left[\frac{k-1}{K}, \frac{k}{K}\right)$ for $k \in \{1,2, \cdots, K\} $, where $ K $ is the number of intervals. The left endpoint of $\mathcal{J}_{k}$ is $x_{k}$. A higher $K$ leads to a smaller approximation error. We construct a piecewise constant function $h$ to approximate $f$. The error can be arbitrarily small as long as the divided interval goes tiny. Based on the interval division, we have
\begin{equation}
h(x)\approx f(\tilde{x}_k), ~~ x \in \mathcal{J}_{k},
\end{equation}
where $\tilde{x}_k$ is a point from $\mathcal{J}_{k}$.

\textit{Step 2.} As Fig.~\ref{fig:proofsketch} shows, use the function $\phi_{1}(x)=\lfloor K x\rfloor$ mapping all $x$ in the interval $\mathcal{J}_{k}$ to $k$, where $\lfloor\cdot\rfloor$ is the flooring function. There exists a one-to-one correspondence between $k$ and $f(\tilde{x}_k)$. Thus, via dividing intervals and applying the floor function, the problem of approximation is simplified into a point-fitting problem. Therefore, we just need to construct a point-fitting function to map $k$ to $f(\tilde{x}_k)$.

\begin{figure}[!htbp]
    \centering
\includegraphics[width=0.8\linewidth]{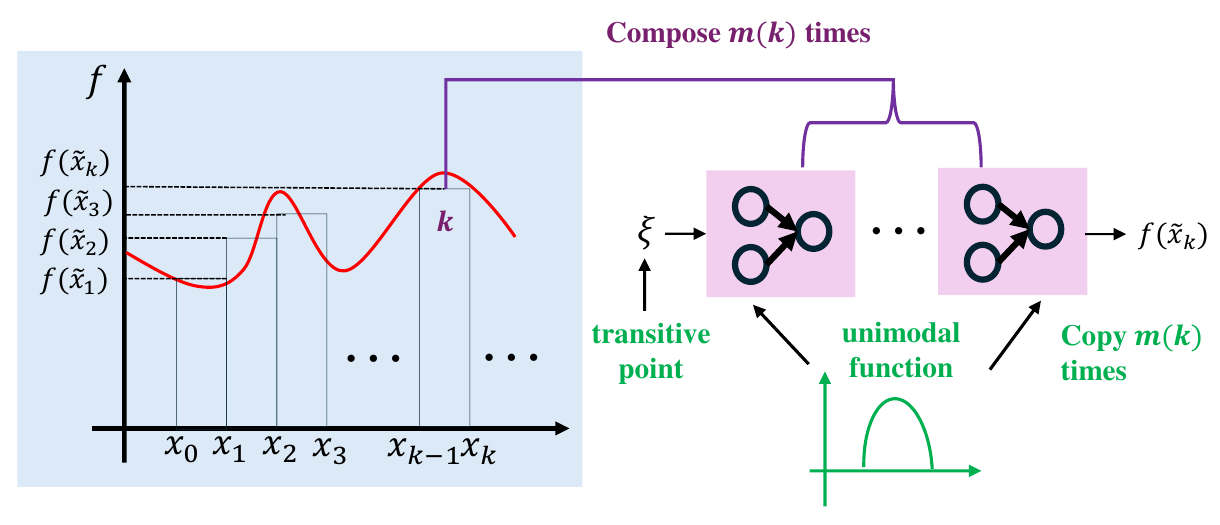}
    \caption{A sketch of our proof. In Steps 1 and 2, through interval partition, the approximation problem is transformed into a point-fitting problem. In Step 3, we use task-driven neurons to construct a unimodal function that can induce the dense trajectory. Then, composing the unimodal function can solve the point-fitting problem.}
    \label{fig:proofsketch}
\end{figure}

\textit{Step 3.} In \cite{2021Deep}, the point-fitting function is constructed as $\tau(\theta/(\pi+n))$ based on Proposition~\ref{irrational_winding}. In contrast, we solve it with a discrete dynamic system that is constructed by a network made of task-driven neurons. We design a sub-network to generate a function $\phi_{2}$ mapping $k$ approximately to  $f\left(\tilde{x}_{k}\right)$ for each $k$. Then $\phi_{2}(\phi_{1}(x))=\phi_{2}(k) \approx f(\tilde{x}_{k}) \approx f(x) $ for any  $x \in \mathcal{J}_{k}$  and $ k \in\{1,2, \cdots, K\} $, which implies $ \phi_{2} \circ \phi_{1} \approx f$ on $[0,1)$. $\phi_{2}$ is $T^{m(k)}(\xi)$ based on Lemma~\ref{eqn:ddt}, where $T$ is a unimodal function from $[0,1] \to [0,1]$ that can also induce the dense trajectory.

\begin{prop}[\cite{katok1995introduction}]
For any given set $\{w_n\}_{n=1}^N \subseteq [0,1]$ and $\epsilon > 0$, there exists a value $\theta^* \in [0,+\infty)$, such that
\begin{equation}
    |w_n-\tau(\frac{\theta^*}{\pi+n})|<\epsilon,\; n=1,\cdots,N,
\label{eqn:core}
\end{equation} where $\tau(z)=z-\lfloor z \rfloor$.
\label{irrational_winding}
\end{prop}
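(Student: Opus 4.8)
The plan is to prove Proposition~\ref{irrational_winding} by identifying the map $\theta \mapsto (\tau(\theta/(\pi+1)), \dots, \tau(\theta/(\pi+N)))$ with an orbit of a one-parameter flow on the $N$-torus $\mathbb{T}^N = (\mathbb{R}/\mathbb{Z})^N$, and then invoking the density of that orbit. Concretely, I would define the frequency vector $\bm{\omega} = (1/(\pi+1), 1/(\pi+2), \dots, 1/(\pi+N)) \in \mathbb{R}^N$ and consider the linear flow $\theta \mapsto \theta \bm{\omega} \pmod{1}$ on $\mathbb{T}^N$. Since $\tau(z) = z - \lfloor z \rfloor$ is exactly the projection $\mathbb{R} \to \mathbb{R}/\mathbb{Z}$ read off as a number in $[0,1)$, the point $(\tau(\theta/(\pi+1)), \dots, \tau(\theta/(\pi+N)))$ is precisely the image of $\theta\bm{\omega}$ under the canonical identification of $\mathbb{T}^N$ with $[0,1)^N$. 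Approximating the target vector $(w_1, \dots, w_N)$ within $\epsilon$ in each coordinate is therefore the same as showing that the orbit $\{\theta\bm{\omega} \bmod 1 : \theta \ge 0\}$ comes within $\epsilon$ (in the sup-metric on the torus) of the point $(w_1, \dots, w_N)$.

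The key analytic fact I would deploy is the classical Weyl/Kronecker theorem on linear flows: the orbit of $\theta \mapsto \theta\bm{\omega} \pmod 1$ is dense (indeed equidistributed) in $\mathbb{T}^N$ if and only if the components $\omega_1, \dots, \omega_N$ together with $1$ are linearly independent over $\mathbb{Q}$, i.e. there is no nonzero integer vector $(m_1, \dots, m_N)$ and integer $m_0$ with $\sum_{i=1}^N m_i \omega_i = m_0$. This density is exactly what the proposition asserts, so the entire proof reduces to verifying the rational-independence hypothesis for the specific frequencies $\omega_i = 1/(\pi+i)$. I would lay out the two steps in order: first, state the Kronecker density theorem (this is the ``assumed earlier result'' I lean on, attributed as in \cite{katok1995introduction}); second, check that $\{1, 1/(\pi+1), \dots, 1/(\pi+N)\}$ is $\mathbb{Q}$-linearly independent.

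The main obstacle is precisely this arithmetic independence check, since it is the only place where the transcendence of $\pi$ enters. I would argue by contradiction: suppose $m_0 + \sum_{i=1}^N m_i/(\pi+i) = 0$ with integers not all zero. Clearing denominators by multiplying through by $\prod_{i=1}^N (\pi+i)$ yields a polynomial identity in $\pi$ with integer coefficients, namely $m_0 \prod_{i}(\pi+i) + \sum_i m_i \prod_{j\ne i}(\pi+j) = 0$. Because $\pi$ is transcendental, every coefficient of this polynomial (viewed in the indeterminate $\pi$) must vanish. Evaluating the identity formally at $\pi = -i$ for each fixed $i \in \{1, \dots, N\}$ kills all terms except $m_i \prod_{j \ne i}(-i + j) = m_i \prod_{j\ne i}(j-i)$, and since $\prod_{j\ne i}(j-i) \ne 0$ we get $m_i = 0$ for every $i$; feeding this back forces $m_0 = 0$ as well, the desired contradiction. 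This establishes the independence, so Kronecker's theorem gives density, and an orbit point within $\epsilon$ of $(w_1,\dots,w_N)$ supplies the required $\theta^* \in [0,+\infty)$, completing the proof.
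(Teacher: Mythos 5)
The paper itself offers no proof of Proposition~\ref{irrational_winding} --- it is imported wholesale from \cite{katok1995introduction} and used only to describe the prior construction of the point-fitting function --- so the comparison is against the standard argument behind that citation, and your proposal is precisely that standard argument: identify $\theta \mapsto (\tau(\theta/(\pi+1)),\dots,\tau(\theta/(\pi+N)))$ with the irrational winding line on $\mathbb{T}^N$, invoke Kronecker--Weyl density, and verify rational independence via the transcendence of $\pi$. The proof is correct in substance, and your independence check (clearing denominators, noting the resulting integer-coefficient polynomial must vanish identically since $\pi$ is transcendental, then evaluating at $X=-i$ to isolate each $m_i$) is clean. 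Two minor blemishes are worth repairing. First, your density criterion is misstated as an \emph{if and only if}: for the continuous-time flow $\theta \mapsto \theta\omega \bmod 1$, density holds iff $\omega_1,\dots,\omega_N$ alone are linearly independent over $\mathbb{Q}$; adjoining $1$ is the criterion for the \emph{discrete} rotation $k \mapsto k\alpha \bmod 1$. The ``only if'' direction you state fails for flows (e.g.\ $\omega=(1,\sqrt{2})$ gives a dense flow orbit on $\mathbb{T}^2$ although $1,\omega_1,\omega_2$ are rationally dependent); however, since you verify the stronger condition including $1$, the sufficiency you actually use is unaffected and nothing breaks. Second, density in the quotient metric on $\mathbb{T}^N$ gives closeness modulo $1$, whereas \eqref{eqn:core} demands closeness in absolute value with $\tau$ valued in $[0,1)$; for $w_n$ near $0$ or $1$ a small torus ball can wrap around the seam, so you should target a slightly interior point (e.g.\ clip each $w_n$ into $[\epsilon/2,\,1-\epsilon/2]$) before invoking density --- a one-line fix. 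You should also remark explicitly that Weyl equidistribution makes the \emph{forward} semi-orbit $\{\theta\omega \bmod 1 : \theta \ge 0\}$ dense, since the proposition requires $\theta^* \in [0,+\infty)$; your appeal to equidistribution implicitly covers this, but it deserves a sentence.
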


\begin{definition}[\cite{HUANG2005287}]
A map $T: [0,1] \rightarrow [0,1]$ is said to be unimodal if there exists a turning point  $\eta \in [0,1]$  such that the map $f$ can be expressed as
\begin{equation}
    T(x)=\min \left\{T_{\mathrm{L}}(x), T_{\mathrm{R}}(x)\right\}=\left\{\begin{array}{l}
T_{\mathrm{L}}(x), 0 \leqslant x \leqslant \eta, \\
T_{\mathrm{R}}(x), \eta \leqslant x \leqslant 1,
\end{array}\right.
\label{eqn:unimodal_map}
\end{equation}
where $T_{\mathrm{L}}:[0, \eta] \rightarrow[0,1]$  and  $T_{\mathrm{R}}:[\eta, 1] \rightarrow[0,1]$ are continuous, differentiable except possibly at finite points, monotonically increasing and decreasing, respectively, and onto the unit-interval in the sense that $T_{\mathrm{L}}(0)=T_{\mathrm{R}}(1)=0 $ and $ T_{\mathrm{L}}(\eta)=T_{\mathrm{R}}(\eta)=1$.
\end{definition}

\begin{lemma}[Dense Trajectory] There exists a measure-preserving transformation $T$, generated from a network with task-driven neurons, such that there exists $\xi$ with $\{T^n(\xi):n=\mathbb{N}\}$ dense in $[0,1]$. In other words, for any $x$ and $\epsilon>0$, there exist $\xi$ and the corresponding composition times $m$, such that
\begin{equation}
    |x-T^{m} (\xi)|<\epsilon.
\end{equation}
Particularly, we refer to $\xi$ as a transitive point.
\label{eqn:ddt}
\end{lemma}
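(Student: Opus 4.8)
The plan is to exhibit one explicit unimodal map $T:[0,1]\to[0,1]$ that (i) is realizable by a fixed shallow block of task-driven neurons, (ii) preserves a natural measure, and (iii) is ergodic, and then to deduce the existence of a transitive point from ergodicity. The natural candidate is the tent map $T(x)=1-|2x-1|=\min\{2x,\,2-2x\}$, which has turning point $\eta=1/2$, increasing left branch $T_{\mathrm{L}}(x)=2x$ and decreasing right branch $T_{\mathrm{R}}(x)=2-2x$, so it satisfies the unimodal conditions $T_{\mathrm{L}}(0)=T_{\mathrm{R}}(1)=0$ and $T_{\mathrm{L}}(\eta)=T_{\mathrm{R}}(\eta)=1$ of the Definition. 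First I would verify realizability: since $|2x-1|=\mathrm{ReLU}(2x-1)+\mathrm{ReLU}(1-2x)$, we have $T(x)=1-\mathrm{ReLU}(2x-1)-\mathrm{ReLU}(1-2x)$, which is produced by two ReLU neurons with affine aggregation and a linear read-off; affine aggregation is the order-one degeneration of the polynomial aggregation in Eq.~\eqref{eqn:formula}, so $T$ sits inside the task-driven family and serves as the fixed transformation block $T$ of the dynamical system in Step 3. (The logistic map $T(x)=4x(1-x)$ is an even more literal task-driven realization using a single quadratic-aggregation neuron, and the argument below applies verbatim with its absolutely continuous invariant measure in place of Lebesgue.)

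Next I would establish that Lebesgue measure $\lambda$ is $T$-invariant. This is a one-line computation: for any subinterval $[a,b]\subseteq[0,1]$ the preimage $T^{-1}([a,b])$ is the disjoint union of $[a/2,\,b/2]\subseteq[0,1/2]$ and $[1-b/2,\,1-a/2]\subseteq[1/2,1]$, each of length $(b-a)/2$, whence $\lambda\big(T^{-1}([a,b])\big)=b-a=\lambda([a,b])$. Extending from the generating semiring of intervals to the Borel $\sigma$-algebra gives $\lambda(T^{-1}(A))=\lambda(A)$ for all Borel $A$, so $T$ is the required measure-preserving transformation.

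The core of the argument is ergodicity of $T$ with respect to $\lambda$. Here I would pass to symbolic dynamics: $T$ is a full-branch, uniformly expanding piecewise-linear map ($|T'|\equiv 2>1$, both branches onto $[0,1]$), so the itinerary with respect to the partition $\{[0,1/2),[1/2,1]\}$ furnishes a coding map $\pi:\{0,1\}^{\mathbb{N}}\to[0,1]$ that intertwines the one-sided shift $\sigma$ with $T$ and carries the uniform $(1/2,1/2)$ Bernoulli measure to $\lambda$ (each $n$-cylinder has both Bernoulli and Lebesgue measure $2^{-n}$), with the exceptional set where $\pi$ fails to be injective being the countable, hence $\lambda$-null, set of dyadic-type points whose orbit meets $1/2$. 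Since the Bernoulli shift is mixing, hence ergodic, transporting invariant sets through $\pi$ yields ergodicity of $(T,\lambda)$; equivalently one may invoke the Lasota--Yorke theorem, which identifies $\lambda$ as the unique absolutely continuous invariant measure of this expanding map and shows it is exact. Finally, ergodicity together with $\mathrm{supp}(\lambda)=[0,1]$ gives the conclusion: fixing a countable basis $\{B_i\}$ of open balls, Birkhoff's ergodic theorem applied to each indicator $\mathbf{1}_{B_i}$ shows that for $\lambda$-a.e.\ $\xi$ the orbit $\{T^n(\xi)\}_{n\in\mathbb{N}}$ visits every $B_i$, hence is dense in $[0,1]$. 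Thus $\lambda$-a.e.\ $\xi$ is a transitive point, and for each target $x$ and each $\epsilon>0$ there is a composition time $m$ with $|x-T^m(\xi)|<\epsilon$.

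The step I expect to be the main obstacle is the rigorous ergodicity claim, specifically the bookkeeping of the $\lambda$-null exceptional set on which the symbolic coding degenerates (the preimages of the turning point $1/2$, where the two branches of $T^{-1}$ coincide). I would handle this by restricting to the full-measure, forward-invariant set of points that never land on $1/2$, on which $\pi$ is an isomorphism $\mathrm{mod}\ 0$. Alternatively, to bypass ergodic-theoretic machinery entirely, one can build a transitive point explicitly: take the symbolic sequence $s$ obtained by concatenating every finite binary word in some enumeration, whose shift orbit is dense in $\{0,1\}^{\mathbb{N}}$ since any cylinder is eventually visited, and set $\xi=\pi(s)$; as $\pi$ is continuous and surjective, $\{T^n(\xi)\}=\pi(\{\sigma^n s\})$ is the continuous image of a dense set and is therefore dense in $[0,1]$, delivering the transitive point $\xi$ directly without Birkhoff.
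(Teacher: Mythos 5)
Your proof is correct, and it takes a genuinely different route from the paper's. The paper argues in purely topological terms and in greater generality: it takes an \emph{arbitrary} polynomial, reshapes it (by cutting, translating, flipping, scaling) into a unimodal map, asserts topological transitivity via the claim that any positive-measure set contains a small interval that some iterate $T^m$ expands onto all of $[0,1]$, and then converts transitivity plus a countable basis into a dense orbit by the Baire category theorem (the classical Birkhoff transitivity argument, quoted from the cited reference). You instead fix one concrete map --- the tent map, realized exactly by two ReLU neurons with affine aggregation (or the logistic map by a single quadratic-aggregation neuron) --- verify Lebesgue invariance by direct computation of preimages, prove ergodicity through the Bernoulli coding of the full-branch expanding structure, and conclude density of almost every orbit via Birkhoff's ergodic theorem, with an elegant second path that sidesteps ergodic theory entirely by pushing forward an explicitly constructed transitive symbolic sequence through the continuous surjective coding map. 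The trade-off is generality versus rigor. The paper's argument covers the whole polynomial family that the NeuronSeek framework actually produces, but its two crucial claims are left unverified and are in fact false for general unimodal maps: measure preservation of Lebesgue measure does not hold (the logistic map preserves the arcsine density, not Lebesgue --- a point you handle correctly), and topological transitivity fails for unimodal maps with attracting periodic orbits, so the covering property in the paper's step (ii) genuinely needs the expansion hypothesis that your tent map has by construction. Since the lemma is an existence statement, your single fully verified example suffices, and it has the additional merit of making the ``measure-preserving'' clause in the lemma literally true; your explicit transitive point also yields a constructive witness, whereas both the paper's Baire-category argument and your Birkhoff argument are non-constructive.
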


\begin{proof}
Since we extensively apply polynomials to prototyping task-driven neurons, we prove that a polynomial of any order can fulfill the condition. The proof can be easily extended to other functions with minor twists.

\begin{figure}[!htbp]
    \centering
\includegraphics[width=0.8\linewidth]{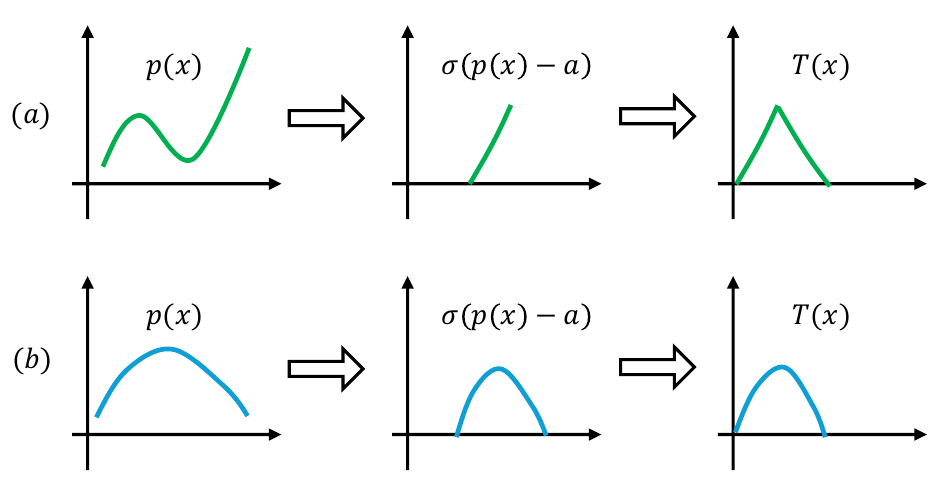}
    \caption{A polynomial can be transformed into a unimodal map.}
  \label{fig:unimodal}
\end{figure}

i) Given a polynomial of any order, as shown in Fig.~\ref{fig:unimodal}, through cutting, translation, flipping, and scaling, we can construct a unimodal map $T(x): [0,1] \to [0,1]$.

ii) For a unimodal map $T$, we can show that for any two non-empty sets $U, V \in X$ with $\mathfrak{m}(U),\mathfrak{m}(V) >0$, there exists an $m$ such that $T^m(U) \cap V \neq \varnothing$. As Fig.~\ref{fig:transitive} shows, this is because a set $U$ with $\mathfrak{m}(U)>0$ will always contain a tiny subset $\mathcal{J}_{mi}$ such that $T^{m}(\mathcal{J}_{mi})=[0,1]$. Then, naturally $T^m(U) \cap V \neq \varnothing$ due to $\mathcal{J}_{mi} \subset U$.

\begin{figure}[!htbp]
    \centering
\includegraphics[width=0.8\linewidth]{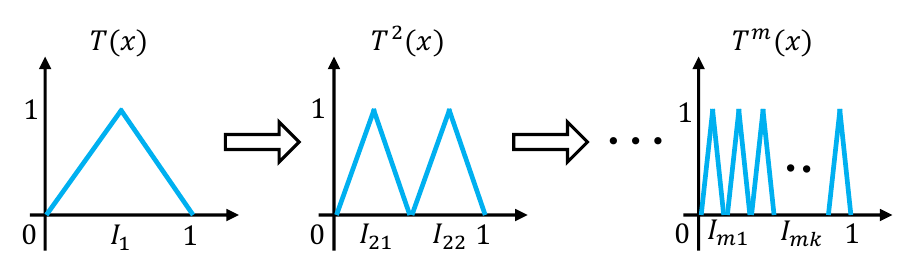}
    \caption{Given a set $U$ with $\mathfrak{m}(U)>0$, it will always contain a tiny subset $\mathcal{I}_{mk}$ such that $T^{mk}(U)=[0,1]$. Then, naturally $T^m(U) \cap V \neq \varnothing$, since $V \in [0,1]$.}
  \label{fig:transitive}
\end{figure}

iii) The above property can lead to a dense orbit, combining that $[0,1]$ is with a countable basis, according to Proposition 2 of \cite{deugirmenci2003existence}. For the sake of self-sufficiency, we include their proof here.

Let $\left(V_{i}\right)_{i \in I}$ be a countable base for  $X$. For $i \in I $, the set  $W_{i}=\bigcup_{n \geqq 0} f^{-n}\left(V_{i}\right)$ is open by continuity of $f$. This set is also dense in $X$. To see this let $ U$  be any non-empty open set. Because of topological transitivity there exists a  $k>0$ with $f^{k}(U) \cap V_{i} \neq \emptyset $. This gives  $f^{-k}\left(V_{i}\right) \cap U   \neq \emptyset$ and $W_{i} \cap U \neq \emptyset$. Thus  $W_{i} $ is dense. By the Baire category theorem, the set  $B=\cap_{i \in I} W_{i} $ is dense in $ X$. Now, the orbit of any point $\xi \in B$ is dense in $X$.

Because, given any non-empty open $ U \subset X $, there is an  $i \in I $ with  $V_{i} \subset U $ and $ k>0$  with $ \xi \in f^{-k}\left(V_{i}\right)$ . This means  $f^{k}(\xi) \in V_{i} \subset U$. Thus the orbit of  $\xi$  enters any $U$.

\end{proof}

\noindent \textit{Proof of Theorem~\ref{main}}.
Now, we are ready to prove Theorem~\ref{main}. First, the input $x$ is mapped into an integer $k$, whereas $k$ corresponds to the composition time $m(k)$. Next, a network module representing $T$ is copied $m(k)$ times and composed $m(k)$ times such that $T^{m(k)}(\xi) \approx f(\tilde{x}_k)$. Since $f(\tilde{x}_k)$ can approximate $f(x)$ well, $T^{m(k)}(\xi)$ also approximates $f(x)$ well.

Theorem~\ref{main} can be easily generalized into multivariate functions by using the Kolmogorov-Arnold theorem~\cite{liu2024kan}.

\begin{prop}
For any $\x=\left[x_{1}, x_{2}, \cdots, x_{d}\right]^{T} \in[0,1]^{d}$, there exist continuous functions  $h_{i, j} \in C([0,1]) $ for  $i=0,1, \cdots, 2 d $ and  $j=1,2, \cdots, d $ such that any continuous function $ f \in C\left([0,1]^{d}\right)$  can be represented as
\begin{equation}
    f(\x)=\sum_{i=0}^{2 d} \Psi_{i}\left(\sum_{j=1}^{d} \Phi_{i, j}\left(x_{j}\right)\right).
\end{equation}
where  $\Psi_{i}: \mathbb{R} \rightarrow \mathbb{R} $ is a continuous function for each  $i \in\{0,1, \cdots, 2 d\} $.
\end{prop}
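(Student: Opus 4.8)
The plan is to recognize this Proposition as the Kolmogorov--Arnold superposition theorem in its refined form, in which the inner functions $\Phi_{i,j}$ are \emph{universal} (fixed independently of $f$) and only the outer functions $\Psi_i$ carry the dependence on $f$. The cleanest route is simply to invoke the classical result of Kolmogorov, later sharpened by Lorentz, Sprecher, and Kahane. For a self-contained argument I would split the construction into two largely independent pieces: first build a fixed family of $2d+1$ inner ``coordinate'' maps that embed the cube $[0,1]^d$ into $\mathbb{R}^{2d+1}$ with a strong separation property, and then construct the outer functions by an iterative approximation scheme that converges in the uniform norm.

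For the inner functions I would take a single universal continuous strictly increasing $\psi:[0,1]\to\mathbb{R}$ together with rationally independent positive constants $\lambda_1,\dots,\lambda_d$, and set $\Phi_{i,j}(x_j)=\lambda_j\,\psi(x_j+i\,a)$ for a suitable shift $a>0$, so that $\Phi_{i,j}$ depends on neither $f$ nor the particular target. Writing $\xi_i(\x)=\sum_{j=1}^d \Phi_{i,j}(x_j)$ for the $i$-th inner sum, the key structural lemma is a \emph{separation} property: after covering $[0,1]^d$ by a fine grid of cubes, the images of distinct grid cubes under $\xi_i$ are disjoint intervals for all but a controlled number of the $2d+1$ layers, so the map $\x\mapsto(\xi_0(\x),\dots,\xi_{2d}(\x))$ is injective and ``mostly'' separating. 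This is the part I expect to be the main obstacle: the existence of such universal $\psi$ and $\lambda_j$ is not explicit, and is classically obtained either through Sprecher's base-$\gamma$ digit construction or, more transparently, through a Baire-category argument showing that the separating functions form a residual set in $C([0,1])$.

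Given the inner embedding, the outer functions follow from a greedy iteration. Starting from $f_0=f$, the separation lemma lets me pick a single univariate continuous $g$ with $\|f_0-\sum_{i} g\circ\xi_i\|_\infty \le \kappa\,\|f_0\|_\infty$ for a fixed $\kappa<1$; setting $f_1=f_0-\sum_i g\circ\xi_i$ and repeating produces a geometrically decaying residual. Summing the successive corrections expresses each $\Psi_i$ as a uniformly convergent series of continuous functions, hence continuous, and the telescoping gives $f=\sum_{i=0}^{2d}\Psi_i\!\big(\sum_{j=1}^{d}\Phi_{i,j}(x_j)\big)$. The uniform error-reduction factor $\kappa<1$ at each step is exactly what the separation lemma buys, so once that lemma is in place the convergence is routine. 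Finally, to connect the representation to our setting I would combine it with Theorem \ref{main}: since each $\Psi_i$ and each $\Phi_{i,j}$ is univariate and continuous, each can be approximated to arbitrary accuracy by a task-driven subnetwork, and composing these subnetworks according to the superposition formula extends the fixed-parameter super-expressive approximation from $C([0,1])$ to $C([0,1]^d)$.
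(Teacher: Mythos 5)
Your proposal is correct and takes essentially the same route as the paper: the paper states this proposition as the classical Kolmogorov--Arnold superposition theorem (in the refined Lorentz--Sprecher form with universal inner functions) and gives no proof beyond that identification, which is precisely the identification you make. Your added sketch---the universal separating inner maps via a Baire-category (or Sprecher digit) construction plus the geometric-decay greedy iteration for the outer functions---is the standard proof of that classical theorem, and your final step of composing task-driven subnetworks per the superposition formula matches how the paper uses the proposition to extend Theorem~\ref{main} to $C([0,1]^d)$.
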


\textbf{Remark 2}. Our construction reduces the parametric complexity. In \cite{2021Deep}, though the number of parameters remains unchanged, the values of parameters $\theta^*$ increase as long as $n$ increases. Therefore, the overall parametric complexity still goes up, as it consumes more digits. In contrast, in our construction, the parameter value of $T$ is also fixed for different $(k,f(\tilde{x}_k))$. Yet, $T$ is copied and composed $m(k)$ times. The overall parametric complexity remains unchanged. What increases herein is computational complexity. Therefore, we call for brevity our construction ``super-super-expressive''.

\section{Analysis Experiments of NeuronSeek}

\subsection{Experiment on Synthetic Data}\label{sec:syn}

In this section, we compare the performance of our proposed NS-TD with former NS-SR\cite{fan2024no} and traditional symbolic regression methods on synthetic data. We select representative benchmark methods including classic symbolic regression~\cite{schmidt2009distilling}, EQL~\cite{martius2017extrapolation}, and the state-of-the-art method MetaSymNet~\cite{li2025metasymnet}. Synthetic dataset is generated by the functions in Table~\ref{tab:synthetic_formulas}. For each task, input vectors $\mathbf{x} \in \mathbb{R}^d$ are sampled from a standard normal distribution $\mathbf{x} \sim \mathcal{N}(\mathbf{0}, \mathbf{I}_d)$, and the target $\mathbf{y}$ is standardized to zero mean and unit variance. Crucially, regarding the function coefficients, while most are sampled from a normal distribution to ensure diversity, we deliberately assign high-contrast values (e.g., 10 vs. 0.5) in specific cases. This design simulates signal masking scenarios, thereby increasing the difficulty of capturing weak structural information amidst dominant features.

\begin{table}[!htbp]
    \centering
    \caption{\textbf{Synthetic Dataset Generation Formulas.} Comparisons of Pure, Interaction, and Hybrid modes. Coefficients $c_i$ are sampled from $\mathcal{N}(0,1)$.}
    \label{tab:synthetic_formulas}
    
    % Adjust font size and row spacing for compactness
    \small 
    \renewcommand{\arraystretch}{1.2}
    % \setlength{\tabcolsep}{5pt}
    
    % Resize table to fit text width
    \scalebox{1}{
    \begin{tabular}{c | c | c | c}
        \toprule
        \multirow{2}{*}{\textbf{ID}} & \multicolumn{1}{c|}{\textbf{Pure Mode}} & \multicolumn{1}{c|}{\textbf{Interact Mode}} & \multicolumn{1}{c}{\textbf{Hybrid Mode}} \\
         & \multicolumn{1}{c|}{$f(\mathbf{x}) \sim P_k$} & \multicolumn{1}{c|}{$f(\mathbf{x}) \sim I_k$} & \multicolumn{1}{c}{$f(\mathbf{x}) \sim P_k + I_k$} \\
        \midrule
        0 & $c \cdot P_2(\mathbf{x})$ & $c \cdot I_2(\mathbf{x})$ & $c_1 P_2(\mathbf{x}) + c_2 I_2(\mathbf{x})$ \\
        1 & $c \cdot P_3(\mathbf{x})$ & $c_1 I_2(\mathbf{x}) + c_2 I_3(\mathbf{x})$ & $c_1 P_1(\mathbf{x}) + c_2 I_2(\mathbf{x})$ \\
        2 & $10 P_1(\mathbf{x}) + 0.5 P_2(\mathbf{x})$ & $8 I_2(\mathbf{x}) + 0.5 I_3(\mathbf{x})$ & $5 P_3(\mathbf{x}) + 0.5 I_2(\mathbf{x})$ \\
        3 & $5 P_2(\mathbf{x}) + 0.5 P_4(\mathbf{x})$ & $c_1 I_2(\mathbf{x}) + c_2 I_4(\mathbf{x})$ & $0.5 P_2(\mathbf{x}) + 5 I_3(\mathbf{x})$ \\
        4 & $c \cdot P_5(\mathbf{x})$ & $c \cdot I_4(\mathbf{x})$ & $c_1 P_2(\mathbf{x}) + c_2 I_2(\mathbf{x}) + c_3 I_3(\mathbf{x})$ \\
        \bottomrule
    \end{tabular}}
    \parbox{\textwidth}{\footnotesize{\textit{Note:} $P_k(\mathbf{x}) = \sum_{j=1}^d x_j^k$ denotes the element-wise pure polynomial term. $I_k(\mathbf{x}) = \prod_{m=1}^k (\mathbf{w}_m^\top \mathbf{x})$ denotes the interaction term of order $k$, where $\mathbf{w}_m \sim \mathcal{N}(0, 1/\sqrt{d})$.}}
\end{table}

\newpage
The experimental pipeline is structured into two distinct stages: structure discovery and structure evaluation. In the first stage, all methods operate under a strict time budget of 60 seconds to search for the underlying regression formula of a given dataset. Subsequently, we extract the symbolic structure of the discovered formula and proceed to the second stage, where we re-optimize the constants from scratch to evaluate the model's representational capacity. This two stage method is designed to ensure that we solely evaluate the capacity of searched structure. For our proposed NS-TD, we set the tensor rank to 8, which provides sufficient capacity for the synthetic functions, relying on the model's sparsity mechanism to prune redundant components. The regularization coefficient is set to 0.05, a value empirically selected to balance reconstruction fidelity with structural sparsity. For genetic programming-based methods (Standard-SR and TN-SR), following the configuration in~\cite{fan2024no}, we set the population size to 1,000, with crossover, mutation, and reproduction probabilities of 30\%, 60\%, and 10\%, respectively; an elitism strategy is applied to select the top 5\% of individuals for the next generation. For MetaSymNet~\cite{li2025metasymnet}, we adopt an entropy loss regularization of 0.2 and a learning rate of 0.01. The hyperparameter settings for EQL follow those in~\cite{martius2017extrapolation}. For data preparation, we generate 2,500 samples per function, split into training and test sets with an 80/20 ratio.

We report the average MSE across five distinct functions in Table~\ref{tab:synthetic_scalability_main}. NS-TD demonstrates a decisive advantage over competing symbolic regression baselines. Across all functional modes and dimensionalities, our method consistently achieves the lowest test MSE, validating its superior capability in identifying mathematical structures that accurately approximate high-dimensional data manifolds. In contrast, while the tensor network-based NS-SR outperforms traditional discrete search methods like PySR and MetaSymNet in the Pure Mode, it exhibits a significant performance deterioration in the Interact and Hybrid modes. This performance gap stems from NS-SR's architectural reliance on univariate transformations, which precludes the modeling of cross-feature interactions, thereby failing to capture the coupled dynamics inherent in the latter two scenarios. Moreover, NS-TD displays remarkable stability as the input dimension scales from d=10 to d=100. Unlike baseline methods which suffer from convergence difficulties, our approach maintains consistently low error rates, confirming its scalability and robustness in handling structure discovery tasks with high-dimensional data, which is common in deep learning scenarios. Moreover, the Standard Error of the Mean (SEM) across different trials are reported in Table~\ref{tab:synthetic_scalability_appendix1} -~\ref{tab:synthetic_scalability_appendix3} in the 
\textit{Appendix}.

\begin{table*}[!htbp]
    \centering
    \caption{\textbf{Performance Analysis on Synthetic Data.} We evaluate the reconstruction accuracy (Mean MSE) across Pure, Interact, and Hybrid modes under varying input dimensions ($d \in \{10, 30, 50, 100\}$). Best results are highlighted in \textbf{bold}.}
    \label{tab:synthetic_scalability_main}
    \renewcommand{\arraystretch}{1.4}
    \newcolumntype{C}{>{\centering\arraybackslash}p{1.15cm}}
  \scalebox{0.75}{%
    \begin{tabular}{l | C C C C | C C C C | C C C C}
        \toprule
        \multirow{3}{*}{\textbf{Method}} & \multicolumn{4}{c|}{\textbf{Pure Mode}} & \multicolumn{4}{c|}{\textbf{Interact Mode}} & \multicolumn{4}{c}{\textbf{Hybrid Mode}} \\
        & \multicolumn{4}{c|}{{($\sum x^k$)}} & \multicolumn{4}{c|}{{(Vector Interact)}} & \multicolumn{4}{c}{{(Mixture)}} \\
        \cmidrule(lr){2-5} \cmidrule(lr){6-9} \cmidrule(lr){10-13}
        & \textbf{10} & \textbf{30} & \textbf{50} & \textbf{100} & \textbf{10} & \textbf{30} & \textbf{50} & \textbf{100} & \textbf{10} & \textbf{30} & \textbf{50} & \textbf{100} \\
        \midrule
        SR (PySR)
        & 1.1612 & 1.1493 & 1.2325 & 1.2481
        & 0.8754 & 1.0042 & 0.9776 & 0.9763
        & 1.0635 & 0.9821 & 1.0764 & 1.1018 \\
        EQL
        & 0.4524 & 0.5342 & 0.5705 & 0.6923
        & 0.3875 & 0.4862 & 0.5614 & 0.4521
        & 0.3126 & 0.4113 & 0.4237 & 0.4472 \\
        NS-SR (VSR)
        & 0.2324 & 0.2541 & 0.2825 & 0.2953
        & 0.6972 & 0.7734 & 0.7326 & 0.8145
        & 0.7783 & 0.7824 & 0.8281 & 0.8685 \\
        MetaSymNet
        & 0.8324 & 0.8541 & 0.9825 & 0.9843
        & 0.8632 & 1.0804 & 1.0193 & 0.9942
        & 1.1345 & 0.9526 & 1.1093 & 1.1334 \\
        \midrule
        \textbf{NS-TD}
        & \textbf{0.0543} & \textbf{0.0632} & \textbf{0.0854} & \textbf{0.0891}
        & \textbf{0.0512} & \textbf{0.0583} & \textbf{0.0724} & \textbf{0.0945}
        & \textbf{0.0423} & \textbf{0.0621} & \textbf{0.0735} & \textbf{0.1112} \\
        \bottomrule
    \end{tabular}%
    }
\end{table*}

{\textbf{Convergence and Efficiency}}. To further investigate the optimization dynamics, we analyze the convergence trajectory of all methods on the most challenging task (Hybrid Mode, $d = 100$), as illustrated in Fig.~\ref{fig:convergence}. We can observe that our method NS-TD (Red line) converges to the optimal solution faster than evolutionary baselines. This confirms that our differentiable architecture search effectively bypasses the combinatorial search space that hinders methods like NS-SR and SR. Though EQL stabilizes at a plateau at 5.5 seconds, quicker than NS-TD (approximately at 10s), this higher plateau indicates that EQL is stuck in local optima.

\begin{figure}[!htbp]
    \centering
    \includegraphics[width=0.9\linewidth]{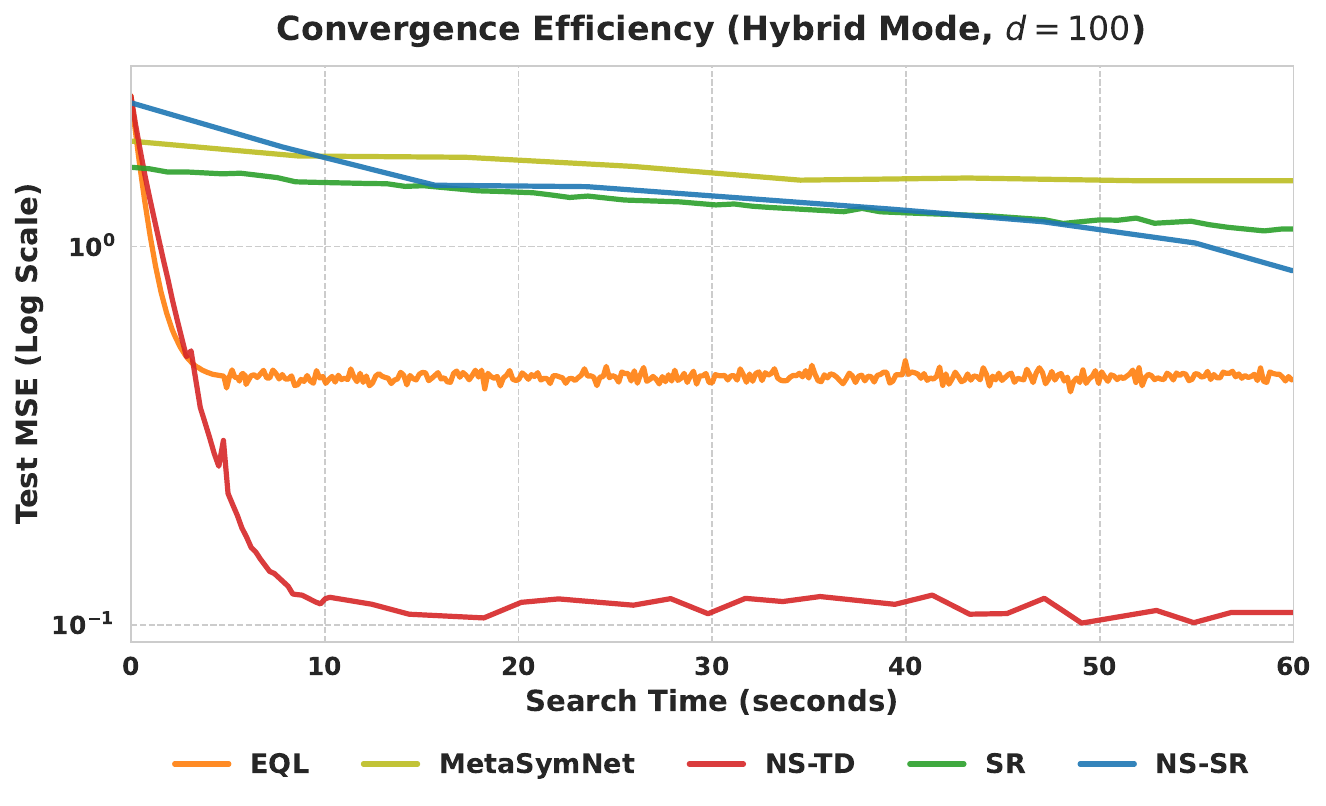}
    \caption{{Convergence Efficiency Analysis (Hybrid Mode, $d=100$).} }
    \label{fig:convergence}
\end{figure}

{\textbf{Why Traditional Symbolic Regression Fails?}} The failure of baseline methods stems from two fundamental limitations. First, the combinatorial explosion in high-dimensional search spaces prevents discrete search algorithms from converging into a good solution within practical time budgets. Second, and more critically, traditional SR tends to generate structurally irregular formulas that are mathematically incompatible with neural architectures that favor homogeneous formulas.

Due to the prohibitive length and structural heterogeneity of baseline outputs, we present a representative case study in Table~\ref{tab:case_study_structure}. As illustrated, baseline methods suffer from consistently observed structural pathologies across high-dimensional tasks: SR generates index-dependent heterogeneous terms; EQL results in excessive density; and MetaSymNet collapses into local features. These heterogeneous structures violate the homogeneity required for tensor operations, meaning they cannot be efficiently vectorized or broadcasted across features. Consequently, unlike the compact forms discovered by NS-TD, these irregular baselines are computationally intractable to serve as scalable aggregation functions for dense deep learning layers.

Deep learning tasks predominantly involve high-dimensional data representations. Through our experiments simulating these high-dimensional generation processes, NS-TD has demonstrated exceptional efficiency in identifying formulas that accurately capture the underlying data patterns. Crucially, in contrast to traditional symbolic regression approaches that yield irregular and computationally intractable expressions, the formulas discovered by NS-TD exhibit the structural homogeneity and compactness required for effective neuronal aggregation. This confirms that NS-TD uniquely satisfies the dual objectives of high reconstruction fidelity and structural suitability for constructing task-based neurons.

\begin{table}[ht]
    \centering
    \caption{\textbf{Case Study: Discovered Structures on Hybrid Mode (Formula ID:0, $d=100$).} We illustrate the symbolic forms identified by different methods. While NS-TD successfully recovers the homogeneous vectorized structure, baselines generate heterogeneous or dense formulas that are ill-suited for neuronal aggregation.}
    \label{tab:case_study_structure}
    \renewcommand{\arraystretch}{1.3}
    \setlength{\tabcolsep}{10pt}
\scalebox{0.8}{
    \begin{tabular}{l | l}
        \toprule
        \textbf{Method} & \textbf{Discovered Structure} \\
        \midrule

        \textbf{Ground Truth}
        & $P_2(\mathbf{x}) + I_2(\mathbf{x})$ \\

        \textbf{NS-TD (Ours)}
        & $P_2(\mathbf{x}) + I_2(\mathbf{x})$ \\

        \midrule

        NS-SR
        & $P_1(\mathbf{x}) + P_2(\mathbf{x})$ \\

        SR
        & $\frac{x_{12} + x_{19}}{x_{16}} + x_{25}^2 - 0.05 x_{12} \dots$ \\

        EQL
        & Mixture of 79 active terms ($\sum w_i \phi_i(\mathbf{x})$) \\

        MetaSymNet
        & $\log(\exp(x_{82})) + x_{45}^2$ \\

        \bottomrule
    \end{tabular}}
\end{table}

\subsection{Experiments on Uniqueness and Stability}

Here, we demonstrate that NS-SR exhibits instability in terms of resulting in divergent formulas under minor perturbations in initialization, while NS-TD maintains good consistency.

\subsubsection{Experimental Settings}
Our experimental protocol consists of two benchmark datasets: the \textit{phoneme} dataset (a classification task classifying nasal and oral sounds) and the \textit{airfoil self-noise} dataset (a regression task modeling self-generated noise of airfoil). These public datasets are chosen from the OpenML website to represent both classification and regression tasks, ensuring a comprehensive evaluation across different problem types. For each dataset, we introduce four levels of Gaussian noise ($\sigma \in \{0.01, 0.025, 0.05, 0.1\}$) to simulate real-world perturbations. We train 10 independent instances of each method under identical initialization. For comparison, NS-SR based on GP uses a population of 3,000 candidates evolved over 20 generations, whereas NS-TD methods---employing CP decomposition and Tucker decomposition---rely on a model optimized via Adam over 20 epochs.

To quantify the uniqueness of SR and TD methods, we track two diversity metrics:
\begin{itemize}
    \item \textbf{Epoch-wise Diversity} represents the number of different formulas within each epoch (or generation) across all initializations. For instance, $x^3 + 2x^2$ and $2x^3+x^2 $ are considered identical, whereas $x^3 + x^2$ and $x^3 + x$ are distinct.
    \item \textbf{Cumulative Diversity} measures the average number of unique formulas identified per initialization as the optimization goes on.
\end{itemize}

\subsubsection{Experimental Results}
The results of \textit{phoneme} and \textit{airfoil self-noise} datasets are depicted in Figs.~\ref{fig:combined_airfoil_diversity} and~\ref{fig:combined_phoneme_diversity}, respectively.

\begin{figure*}[htbp]
    \centering
    \subfloat[Epoch-wise diversity\label{fig:air_epoch_div1}]{%
        \includegraphics[]{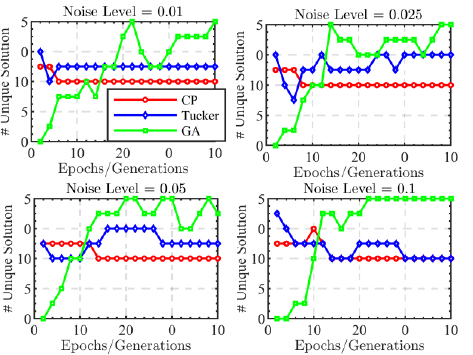}
    }
    \subfloat[Cumulative diversity (average per initialization)\label{fig:unique_air_cumulative}]{%
        \includegraphics[]{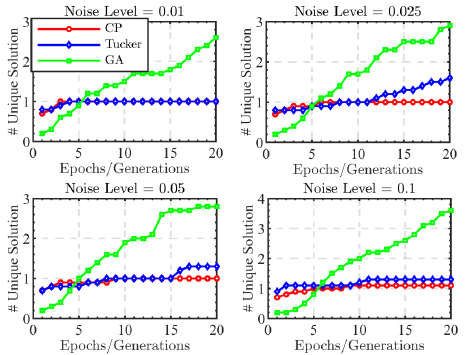}
    }
    \caption{Uniqueness experiment on airfoil self-noise data.}
\label{fig:combined_airfoil_diversity}
\end{figure*}

\begin{figure*}[htbp]
    \centering
    \subfloat[Epoch-wise diversity\label{fig:air_epoch_div2}]{%
        \includegraphics[]{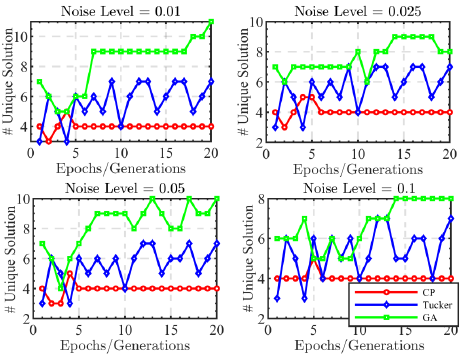}
    }
    \subfloat[Cumulative diversity (average per initialization)\label{fig:unique_phoneme_cumulative}]{%
        \includegraphics[width=0.6\textwidth]{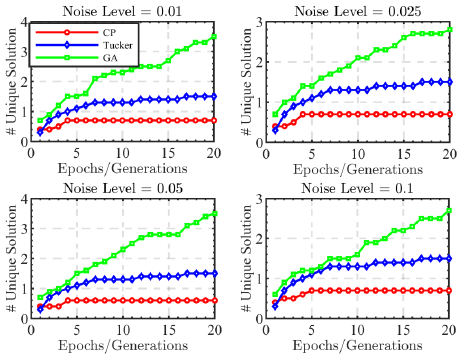}
    }
    \caption{Uniqueness experiment on {phoneme} data.}
    \label{fig:combined_phoneme_diversity}
\end{figure*}

From the epoch-wise diversity (Fig.~\ref{fig:combined_airfoil_diversity} (a) and Fig.~\ref{fig:combined_phoneme_diversity} (a)), we have several highlights: Firstly, GP-based SR demonstrates a rapid increase in the number of discovered formulas within each epoch. After training, it produces more unique formulas compared to TD-based methods (CP and Tucker). This indicates that SR is sensitive to initialization conditions. Secondly, as noise levels increase, the GP method discovers nearly 10 distinct formulas across 10 initializations. With higher noise, every initialization produces a completely different formula structure. These behaviors highlight the instability of GP-based SR when dealing with noisy data, as minor perturbations in initialization or data noise can lead to substantially different symbolic expressions.

Furthermore, regarding cumulative diversity (Fig.~\ref{fig:combined_airfoil_diversity}(b) and Fig.~\ref{fig:combined_phoneme_diversity}(b)), we can observe that GP-based SR constantly discovers an increasing number of different formulas, while TD methods tend to stabilize during training. This indicates that the GP algorithm struggles to converge to a consistent symbolic formula. In contrast, the asymptotic behavior exhibited by TD-based methods demonstrates their superior stability and convergence properties when identifying symbolic expressions. Moreover, the CP-based method suggests the best resilience against initialization variability.

\subsection{Superiority of NS-TD}
\label{sec:TD_exp}

To validate the superior performance of NS-TD, we conduct experiments across 16 benchmark datasets---8 for regression and 8 for classification---selected from scikit-learn and OpenML repositories. This experiment is also conducted in 2 stages. First, we use NS-TD and NS-SR simultaneously to search for the optimal formula. Second, we construct fully connected networks (FCNs) with neurons taking these discovered formulas as their aggregation functions. We evaluate the MSE on regression tasks and accuracy on classification tasks. For comparison, we select Kolmogorov-Arnold Network(KAN)~\cite{liu2024kan} and multi-layer perceptron(MLP) as baselines.

 For regression tasks, the dataset is partitioned into training and test sets in an 8:2 ratio. The activation function is $\texttt{ReLU}$ with MSE as the loss function and $\texttt{RMSProp}$ as the optimizer. For classification tasks, the same train-test split is applied, using the $\texttt{Sigmoid}$ activation function. The loss function and optimizer are set to CE and $\texttt{Adam}$, respectively. All experiments are repeated 10 times and reported by mean. We also report \#parameters and \#FLOPs to show the computation cost of different methods.

The results are illustrated in Table~\ref{tab:full_comparison}, we observe that in regression tasks, NS-TD demonstrates competitive performance, achieving the lowest MSE on four datasets, including \textit{Airfoil}, \textit{Bike Sharing}, \textit{Space GA}, and \textit{Airlines Delay}. Notably, while KAN exhibits strong fitting capability, it suffers from excessive computational overhead (e.g., requiring 10$\times$ more FLOPs than NS-TD on \textit{California Housing}). NS-TD strikes a superior balance, delivering competitive or better accuracy than KAN with significantly lower resource consumption. This advantage is further pronounced in classification tasks, where NS-TD achieves the highest accuracy on 5 out of 8 benchmarks, such as \textit{MagicTelescope}, \textit{Vehicle}, and \textit{Oranges-vs-Grape}. Compared to the MLP baseline, NS-TD consistently yields the performance gain without a proportional increase in model complexity. Collectively, these results confirm that NS-TD effectively generalizes across diverse domains, offering a robust alternative to KAN with greater parameter and computational efficiency. Lastly, we also present the results with standard deviations in the 
\textit{Appendix} Table~\ref{tab:full_comparison_appendix}.

\begin{table}[H]
    \centering
    \caption{\textbf{Comprehensive Comparison on Benchmark Datasets.} We evaluate methods on both regression (MSE $\downarrow$) and classification (Accuracy $\uparrow$) tasks. To assess efficiency, we also report {FLOPs} and {Params}. Note that NS-TD achieves comparable or better performance with significantly lower computational costs.}
    \label{tab:full_comparison}
    \renewcommand{\arraystretch}{1.1}
    \scalebox{0.6}{%
    \begin{tabular}{l | ccc | ccc | ccc | ccc}
        \toprule
        \multirow{2}{*}{\textbf{Dataset}} & \multicolumn{3}{c|}{\textbf{MLP}} & \multicolumn{3}{c|}{\textbf{KAN}} & \multicolumn{3}{c|}{\textbf{NS-SR}} & \multicolumn{3}{c}{\textbf{NS-TD (Ours)}} \\
        & \textbf{Perf.} & \textbf{FLOPs} & \textbf{Param} & \textbf{Perf.} & \textbf{FLOPs} & \textbf{Param} & \textbf{Perf.} & \textbf{FLOPs} & \textbf{Param} & \textbf{Perf.} & \textbf{FLOPs} & \textbf{Param} \\
        \midrule

        \multicolumn{13}{c}{\textit{Regression Tasks (Metric: MSE $\downarrow$)}} \\
        \midrule
        California Housing & \textbf{0.0890} & 1568 & 833 & {0.1017} & 19544 & 7056 & 0.1135 & 3104 & 1601 & \underline{0.1005} & 1568 & 1857 \\
        House Sales        & 0.0077 & 752 & 401 & \textbf{0.0073} & 10551 & 3384 & \underline{0.0075} & 1488 & 769 & \underline{0.0075} & 2224 & 1137 \\
        Airfoil            & 0.0326 & 152 & 89  & \underline{0.0220} & 3017 & 684 & 0.0712 & 152 & 89 & \textbf{0.0209} & 152 & 393 \\
        Diamonds           & 0.0053 & 560 & 305 & \textbf{0.0040} & 8241 & 2520 & \underline{0.0043} & 1104 & 577 & 0.0045 & 560 & 897 \\
        Abalone            & 0.0249 & 200 & 113 & \textbf{0.0232} & 3740 & 900 & 0.0243 & 392 & 209 & \underline{0.0236} & 584 & 305 \\
        Bike Sharing       & 0.0250 & 656 & 353 & 0.0086 & 6460 & 1872 & \underline{0.0081} & 1296 & 673 & \textbf{0.0065} & 1936 & 993 \\
        Space GA           & 0.0064 & 168 & 97  & \textbf{0.0056} & 3258 & 756 & \underline{0.0063} & 328 & 177 & \textbf{0.0056} & 488 & 257 \\
        Airlines Delay     & 0.1605 & 848 & 465 & \underline{0.1563} & 11997 & 3816 & 0.1602 & 1680 & 881 & \textbf{0.1560} & 2512 & 1297 \\

        \midrule
        \multicolumn{13}{c}{\textit{Classification Tasks (Metric: Accuracy $\uparrow$)}} \\
        \midrule
        Credit           & 0.7341 & 608 & 330 & 0.7431 & 3474 & 864 & \underline{0.7441} & 1184 & 618 & \textbf{0.7471} & 1760 & 906 \\
        Heloc            & \underline{0.7107} & 432 & 230 & 0.7027 & 7186 & 1944 & \textbf{0.7117} & 848 & 438 & 0.7077 & 432 & 806 \\
        Electricity      & 0.7762 & 544 & 298 & \textbf{0.7962} & 2992 & 720 & \underline{0.7932} & 1056 & 554 & 0.7862 & 1568 & 810 \\
        Phoneme          & 0.7742 & 448 & 250 & \textbf{0.8292} & 6845 & 2016 & 0.7942 & 864 & 458 & \underline{0.8242} & 448 & 778 \\
        MagicTelescope   & 0.8444 & 608 & 330 & \underline{0.8479} & 8770 & 2736 & 0.8389 & 1184 & 618 & \textbf{0.8495} & 1184 & 1226 \\
        Vehicle          & \underline{0.8106} & 896 & 476 & 0.7576 & 12138 & 4032 & 0.7676 & 1728 & 892 & \textbf{0.8176} & 2560 & 1308 \\
        Oranges-vs-Grape & 0.9105 & 448 & 250 & \underline{0.9302} & 6845 & 2016 & 0.9235 & 864 & 458 & \textbf{0.9344} & 448 & 250 \\
        Eye Movements    & 0.5699 & 400 & 214 & 0.5819 & 6704 & 1800 & \underline{0.5899} & 784 & 406 & \textbf{0.5940} & 400 & 758 \\
        \bottomrule
    \end{tabular}%
    }
\end{table}

\subsection{Comparison of Tensor Decomposition Methods}

Our framework employs the CP decomposition for neuronal discovery. To empirically validate this choice, we conduct a comparative analysis of some tensor decomposition methods used in NS-TD.

\subsubsection{Experimental Settings}

Generally, we follow the previous experimental settings in Section~\ref{sec:TD_exp}: 16 tabular datasets and the same network structures as described in Table~\ref{tab:full_comparison}. We adopt CP decomposition and Tucker decomposition, respectively, for tensor processing in NS-TD. We employ MSE for regression tasks and accuracy for classification tasks.

\subsubsection{Experimental Results}

First, Fig.~\ref{fig:decomp_comparison} illustrates that, for most datasets, the two decomposition methods exhibit comparable performance. Exceptions include \textit{airfoil Self-Noise} and \textit{electricity} datasets, where two methods show slight performance differences. This observation suggests that variations in decompositions for TD methods do not substantially impact the final results. Second, as shown in Fig.~\ref{fig:decomp_time_comparison}, the CP-based method achieves a 42\% reduction in computational time. This result verifies that the CP method offers a computational advantage over the Tucker method.

\begin{figure}[!htbp]
\centering
\subfloat[Regression Task]{
  \includegraphics[width=0.6\columnwidth]{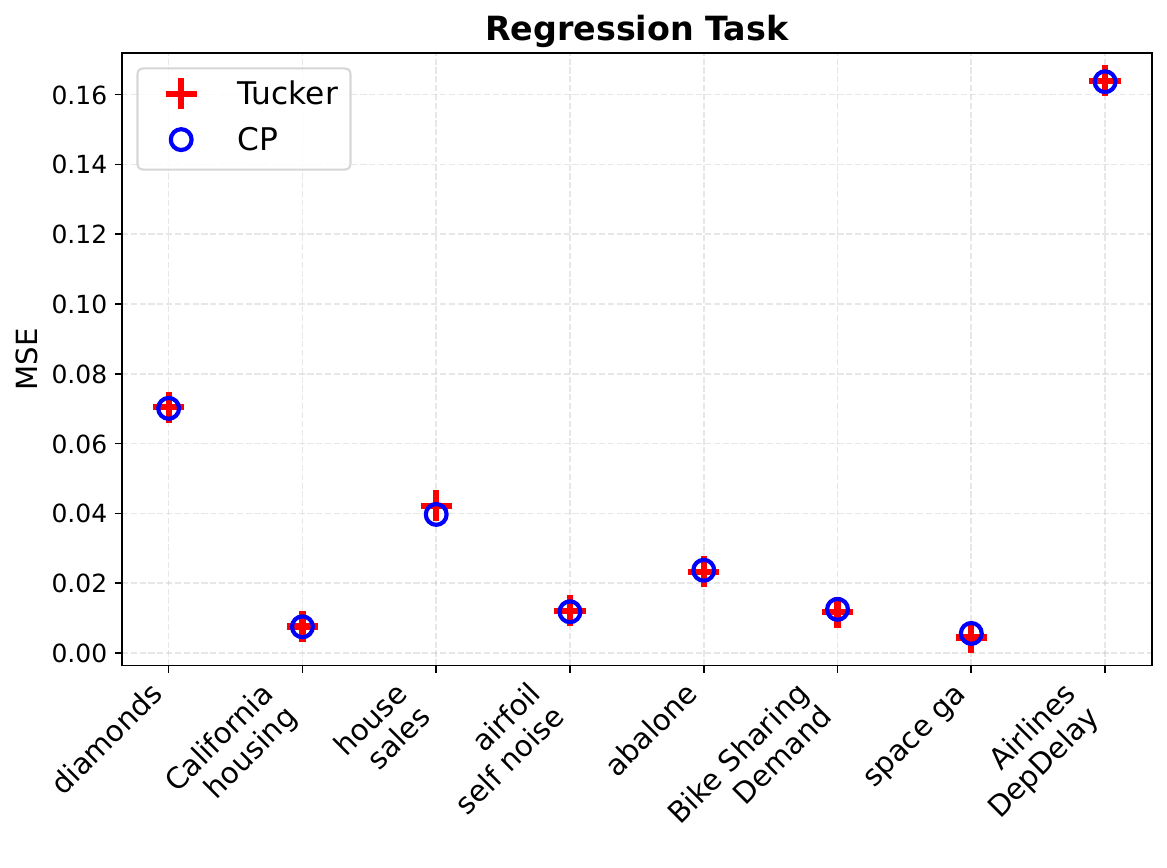}
  \label{fig:decomp_regression}
} 
\subfloat[Classification Task]{
  \includegraphics[width=0.6\columnwidth]{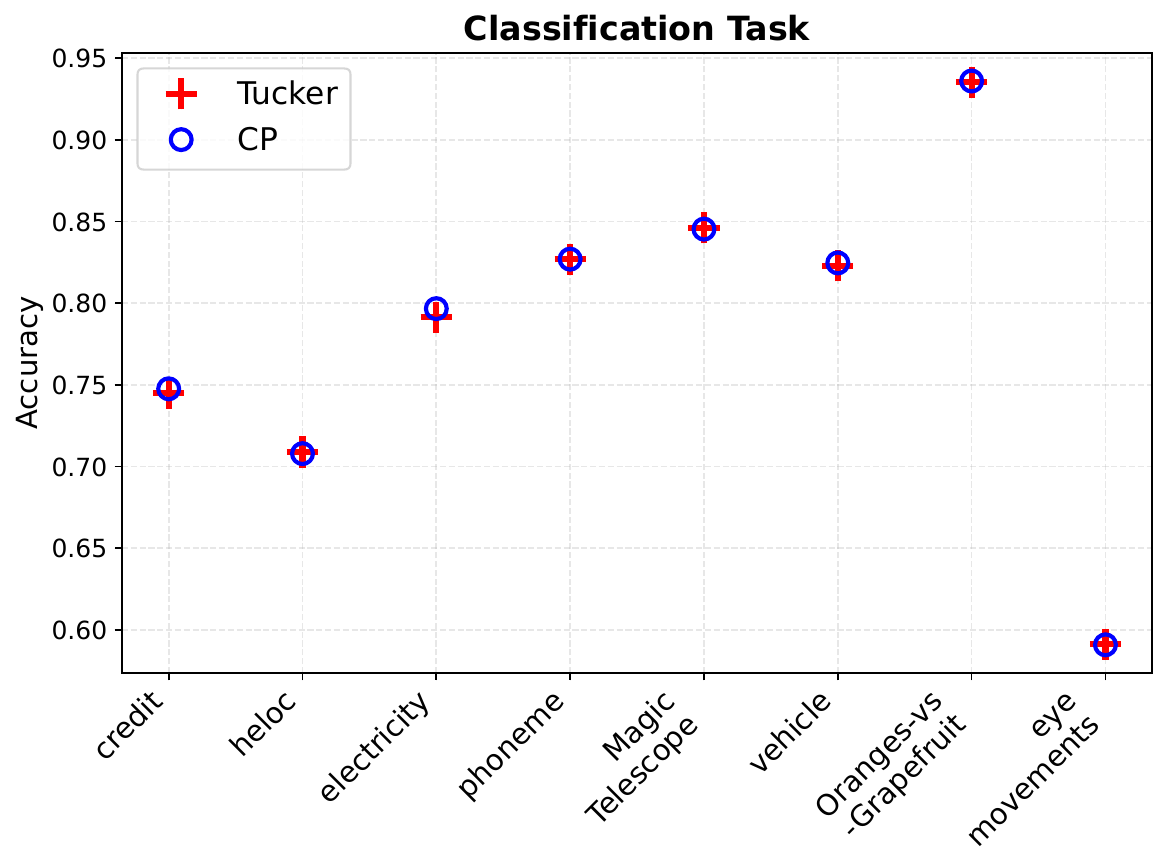}
  \label{fig:decomp_classification}
}
\caption{Comparison between CP and Tucker decomposition.}
\vspace{-0.5cm}
\label{fig:decomp_comparison}
\end{figure}

\begin{figure}[!htbp]
    \centering
    \includegraphics[width=0.6\textwidth]{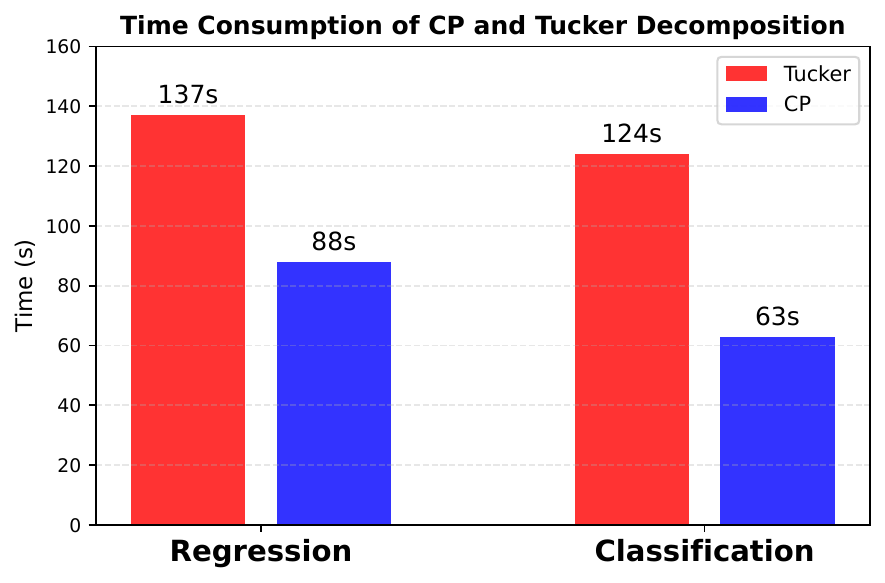}
    \caption{Time comparison between CP and Tucker decomposition.}
    \vspace{-0.5cm}
\label{fig:decomp_time_comparison}
\end{figure}

\section{Competitive Experiments of NS-TD over Other Baselines}

\subsection{Experiment on Tabular Data}

To further illustrate the superiority of NS-TD, we compare it with other state-of-the-art models over 4 real-world tabular tasks: \href{https://www.kaggle.com/datasets/fedesoriano/cern-electron-collision-data}{\textit{Electron Collision Prediction}}, \href{https://www.kaggle.com/datasets/basu369victor/prediction-of-asteroid-diameter}{\textit{Asteroid Prediction}}, \href{https://www.kaggle.com/datasets/fedesoriano/heart-failure-prediction/data}{\textit{Heart Failure Detection}}, and \href{https://www.kaggle.com/datasets/fedesoriano/stellar-classification-dataset-sdss17}{\textit{Stellar Classification}}). We adopt 7 advanced machine learning methods, namely XGBoost~\cite{chen2016xgboost}, LightGBM~\cite{ke2017lightgbm}, CatBoost~\cite{hancock2020catboost}, TabNet~\cite{arik2021tabnet}, TabTransformer~\cite{huang2020tabtransformer}, FT-Transformer~\cite{gorishniy2021revisiting}, DANETs~\cite{chen2022danets} and NS-SR~\cite{fan2024no} as the baselines to compare with NS-TD. Moreover, our proposed model and NS-SR replace standard neurons with what are discovered by NS-TD and NS-SR, respectively.

All datasets are randomly split into training, validation, and test sets in a 0.8:0.1:0.1 ratio. For regression tasks (\textit{electron collision} and \textit{asteroid prediction}), we use the MSE as an evaluation metric, while for classification tasks with imbalanced labels (\textit{heart failure detection} and \textit{stellar classification}), we adopt the F1 score. Detailed experiment information is shown in Table~\ref{tab:real-world data info}.

\begin{table}[htbp]
\centering
 \caption{Details of real-world tabular datasets.}
 \label{tab:real-world data info}
\scalebox{0.8}{\begin{tabular}{lccc}
\hline
\textbf{Dataset}              & \textbf{Task}  & \textbf{Data Size} & \textbf{Features} \\ \hline
\textit{Electron Collision Prediction} & Regression     & 99,915             & 16                \\
\textit{Asteroid Prediction}           & Regression     & 137,636            & 19                \\
\textit{Heart Failure Detection}       & Classification & 1,190              & 11                \\
\textit{Stellar Classification}        & Classification & 100,000            & 17                \\ \hline
\end{tabular}}
\end{table}

In Table~\ref{tab:real-world data comparison}, it can be seen that TabTransformer demonstrates competitive performance on the \textit{electron collision} dataset. However, its effectiveness sharply declines on the asteroid prediction. In contrast, CatBoost and TabNet exhibit more consistent yet suboptimal results across four datasets. Lastly, the proposed NS-TD is the best performer, achieving the lowest error rates and highest F1 scores uniformly across all four benchmarks. Notably, NS-TD significantly outperforms FT-Transformer and DANET. This significant margin underscores the efficacy of its task-adaptive architecture in recognizing feature interactions without domain-specific tuning.

\begin{table*}[!htbp]
    \centering
    \caption{The test results of different models on real-world datasets.}
    \label{tab:real-world data comparison}
    \scalebox{0.65}{%
    \begin{tabular}{lcccc}
        \hline
        \textbf{Method} & \textbf{Electron Collision (MSE)} & \textbf{Asteroid Prediction (MSE)} & \textbf{Heart Failure (F1)} & \textbf{Stellar Classification (F1)}\\
        \hline
        XGBoost & 0.0094 $\pm$ 0.0006 & 0.0646 $\pm$ 0.1031 & 0.8810 $\pm$ 0.02 & 0.9547 $\pm$ 0.002 \\
        LightGBM & 0.0056 $\pm$ 0.0004 & 0.1391 $\pm$ 0.1676 & 0.8812 $\pm$ 0.01 & 0.9656 $\pm$ 0.002 \\
        CatBoost & 0.0028 $\pm$ 0.0002 & 0.0817 $\pm$ 0.0846 & 0.8916 $\pm$ 0.01 & 0.9676 $\pm$ 0.002\\
        TabNet & 0.0040 $\pm$ 0.0006 & 0.0627 $\pm$ 0.0939 & 0.8501 $\pm$ 0.03 & 0.9269 $\pm$ 0.043\\
        TabTransformer & 0.0038 $\pm$ 0.0008 & 0.4219 $\pm$ 0.2776 & 0.8682 $\pm$ 0.02 & 0.9534 $\pm$ 0.002\\
        FT-Transformer & 0.0050 $\pm$ 0.0020 & 0.2136 $\pm$ 0.2189 & 0.8577 $\pm$ 0.02 & 0.9691 $\pm$ 0.002 \\
        DANETs & 0.0076 $\pm$ 0.0009 & 0.1709 $\pm$ 0.1859 & 0.8948 $\pm$ 0.03 & 0.9681 $\pm$ 0.002 \\
        NS-SR & 0.0016 $\pm$ 0.0005 & 0.0513 $\pm$ 0.0551 & 0.8874 $\pm$ 0.05 & 0.9613 $\pm$ 0.002 \\
        NS-TD (ours) & \textbf{0.0011 $\pm$ 0.0003} & \textbf{0.0502 $\pm$ 0.0800} & \textbf{0.9023 $\pm$ 0.03} & \textbf{0.9714 $\pm$ 0.001}\\
        \hline
    \end{tabular}%
    }
\end{table*}

\subsection{Experiments on Images}

To evaluate the performance enhancement of NS-TD in computer vision tasks, we integrate it into several established convolutional neural network architectures: ResNet~\cite{he2016deep}, DenseNet~\cite{huang2017densely}, SeResNet~\cite{hu2018squeeze}, and GoogLeNet~\cite{szegedy2015going}. Notably, we replace the first convolutional layer of each network block with a TN convolutional layer employing the discovered formula $x+\sin x$.  It should be noted that the original NS-SR was not designed for high-dimensional image tasks due to convergence difficulties. Our implementation focuses on NS-TD and follows the standardized training protocol from a popular image classification repository\footnote{\url{https://github.com/weiaicunzai/pytorch-cifar100}}, incorporating several advanced training techniques:  Learning rate scheduling via
$\texttt{WarmUpLR}$ and $\texttt{MultistepLR}$, regularization through $\texttt{Label Smoothing}$, and optimization using $\texttt{SGD}$. We use grid search to fine-tune the learning rate from 0.5 to 0.05. All models are trained for 200 epochs with a batch size of 128. We evaluate the results using classification accuracy averaged over 10 independent runs.

We adopt four standard image datasets to validate the model: \textit{CIFAR 10/100}~\cite{krizhevsky2009learning}, \textit{SVHN}~\cite{netzer2011reading}, and \textit{STL10}~\cite{coates2011analysis}. The training and validation sets are split with a ratio of 0.9:0.1. In particular, images from the \textit{STL10} dataset are resized to $32 \times 32$ to fit the input of the model. The classification results are summarized in Table~\ref{tab:imgclassification}.

\begin{table}[htbp]
\centering
\caption{Accuracy of compared methods on small-scale image datasets.}
\label{tab:imgclassification}
% Adjust the value 0.85 to fit your page (e.g., 0.8, 0.9)
\scalebox{0.8}{
\begin{tabular}{@{}lcccc@{}}
\hline
                       & \textbf{CIFAR10} & \textbf{CIFAR100} & \textbf{SVHN}   & \textbf{STL10}  \\ \hline
ResNet18               & 0.9419           & 0.7385            & 0.9638          & 0.6724          \\
NeuronSeek-ResNet18    & \textbf{0.9464}  & \textbf{0.7582}   & \textbf{0.9655} & \textbf{0.7005} \\ \hline
DenseNet121            & 0.9386           & 0.7504            & 0.9636          & 0.6216          \\
NeuronSeek-DenseNet121 & \textbf{0.9399}  & \textbf{0.7613}   & \textbf{0.9696} & \textbf{0.6638} \\ \hline
SeResNet101            & 0.9336           & 0.7382            & 0.9650          & 0.5583          \\
NeuronSeek-SeResNet101 & \textbf{0.9385}  & \textbf{0.7720}   & \textbf{0.9685} & \textbf{0.6560} \\ \hline
GoogleNet              & 0.9375           & 0.7378            & 0.9644          & 0.7241          \\
NeuronSeek-GoogleNet   & \textbf{0.9400}  & \textbf{0.7519}   & \textbf{0.9658} & \textbf{0.7298} \\ \hline
\end{tabular}
}
\end{table}

Our experiments demonstrate that task-driven neuronal layers consistently enhance the baseline performance across multiple datasets. Notably, NeuronSeek-augmented ResNet18 achieves a 2\% accuracy improvement on both \textit{CIFAR 100} and \textit{STL10} datasets. While SeResNet exhibits median performance on \textit{CIFAR} and \textit{SVHN} datasets, its performance degrades on \textit{STL10}; however, TN layers compensate for this degradation by boosting its accuracy by 10\%. These results suggest that task-driven neuronal formula discovery can effectively unlock the latent potential of fundamental network architectures.

\section{Conclusions}

In this paper, we have proposed a novel method in the framework of NeuronSeek to enhance the capabilities of a neural network. Compared to neural architecture search (NAS)~\cite{white2023neural}, NeuronSeek emphasizes the importance of neurons by searching for optimal aggregation formulas. Firstly, we have introduced tensor decomposition to prototype stable task-driven neurons. Secondly, we have provided theoretical guarantees demonstrating that these neurons can maintain the super-super-expressive property. At last, through extensive empirical evaluation, we have shown that our proposed framework consistently outperforms baseline models while achieving the state-of-the-art results across diverse datasets and tasks. These advancements establish task-driven neurons as an effective addition in the design of task-specific neural network components, offering both theoretical soundness and practical efficacy and stability.

%% Loading bibliography style file
\bibliographystyle{elsarticle-num}

% Loading bibliography database
\bibliography{reference}

\clearpage
\appendix
\section{Supplementary Results}

\subsection{Experiments on Synthetic Data}

Supplementary results on synthetic data (Section \ref{sec:syn}) are illustrated as follows.

\begin{table}[!ht]
\centering
\caption{The reconstruction accuracy ($\text{Mean}_{\pm \text{SEM}}$) of pure mode.}
\scalebox{0.9}{
\begin{tabular}{@{}lcccc@{}}
\toprule
\textbf{Method} & \multicolumn{4}{c}{\textbf{Pure Mode}  ($\sum x^k$)}                                                                         \\ \cmidrule(l){2-5} 
                & \textbf{10}                & \textbf{30}                & \textbf{50}                & \textbf{100}               \\ \midrule
SR (PySR)       & $1.1612_{0.0224}$          & $1.1493_{0.0061}$          & $1.2325_{0.0042}$          & $1.2481_{0.0093}$          \\
EQL             & $0.4524_{0.0232}$          & $0.5342_{0.0141}$          & $0.5705_{0.0123}$          & $0.6923_{0.0132}$          \\
NS-SR (VSR)     & $0.2324_{0.0041}$          & $0.2541_{0.0092}$          & $0.2825_{0.0063}$          & $0.2953_{0.0071}$          \\
MetaSymNet      & $0.8324_{0.0642}$          & $0.8541_{0.0053}$          & $0.9825_{0.0081}$          & $0.9843_{0.0092}$          \\ \midrule
\textbf{NS-TD}  & $\mathbf{0.0543}_{0.0062}$ & $\mathbf{0.0632}_{0.0051}$ & $\mathbf{0.0854}_{0.0083}$ & $\mathbf{0.0891}_{0.0082}$ \\ \bottomrule
\end{tabular}}
\label{tab:synthetic_scalability_appendix1}
\end{table}

\begin{table}[!ht]
\centering
\caption{The reconstruction accuracy ($\text{Mean}_{\pm \text{SEM}}$) of interact mode.}
\scalebox{0.9}{
\begin{tabular}{@{}lcccc@{}}
\toprule
\textbf{Method} & \multicolumn{4}{c}{\textbf{Interact Mode} (Vector Interact)}                                                                        \\ \cmidrule(l){2-5} 
                & \textbf{10}                & \textbf{30}                & \textbf{50}                & \textbf{100}               \\ \midrule
SR (PySR)       & $0.8754_{0.0102}$          & $1.0042_{0.0084}$          & $0.9776_{0.0192}$          & $0.9763_{0.0091}$          \\
EQL             & $0.3875_{0.0174}$          & $0.4862_{0.0021}$          & $0.5614_{0.0042}$          & $0.4521_{0.0073}$          \\
NS-SR (VSR)     & $0.6972_{0.0053}$          & $0.7734_{0.0052}$          & $0.7326_{0.0074}$          & $0.8145_{0.0042}$          \\
MetaSymNet      & $0.8632_{0.0064}$          & $1.0804_{0.0093}$          & $1.0193_{0.0102}$          & $0.9942_{0.0073}$          \\ \midrule
\textbf{NS-TD}  & $\mathbf{0.0512}_{0.0041}$ & $\mathbf{0.0583}_{0.0082}$ & $\mathbf{0.0724}_{0.0053}$ & $\mathbf{0.0945}_{0.0042}$ \\ \bottomrule
\end{tabular}}
\label{tab:synthetic_scalability_appendix2}
\end{table}

\begin{table}[!ht]
\centering
\caption{The reconstruction accuracy ($\text{Mean}_{\pm \text{SEM}}$) of hybrid mode.}
\scalebox{0.9}{
\begin{tabular}{@{}lcccc@{}}
\toprule
\textbf{Method} & \multicolumn{4}{c}{\textbf{Hybrid Mode} (Mixture)}                                                                          \\ \cmidrule(l){2-5} 
                & \textbf{10}                & \textbf{30}                & \textbf{50}                & \textbf{100}               \\ \midrule
SR (PySR)       & $1.0635_{0.0083}$          & $0.9821_{0.0072}$          & $1.0764_{0.0123}$          & $1.1018_{0.0241}$          \\
EQL             & $0.3126_{0.0052}$          & $0.4113_{0.0051}$          & $0.4237_{0.0063}$          & $0.4472_{0.0061}$          \\
NS-SR (VSR)     & $0.7783_{0.0081}$          & $0.7824_{0.0072}$          & $0.8281_{0.0031}$          & $0.8685_{0.0083}$          \\
MetaSymNet      & $1.1345_{0.0124}$          & $0.9526_{0.0082}$          & $1.1093_{0.0173}$          & $1.1334_{0.0162}$          \\ \midrule
\textbf{NS-TD}  & $\mathbf{0.0423}_{0.0041}$ & $\mathbf{0.0621}_{0.0043}$ & $\mathbf{0.0735}_{0.0071}$ & $\mathbf{0.1112}_{0.0012}$ \\ \bottomrule
\end{tabular}}
\label{tab:synthetic_scalability_appendix3}
\end{table}

\newpage
\subsection{Experiments on Tabular Data}

We present the supplementary results of Section \ref{sec:TD_exp} to showcase the superiority of the proposed NS-TD.

\begin{table*}[!htbp]
    \centering
    \caption{The supplementary results with standard deviations on the tabular datasets. The metrics are presented in the format $\text{Mean}_{\text{Std}}$.}
    \label{tab:full_comparison_appendix}
\scalebox{1}{
    \begin{tabular}{l | c | c | c | c}
        \toprule
        \textbf{Dataset} & \textbf{MLP} & \textbf{KAN} & \textbf{NS-SR} & \textbf{NS-TD (Ours)} \\
        \midrule

        \multicolumn{5}{c}{\textit{Regression Tasks (Metric: MSE $\downarrow$)}} \\
        \midrule
        California Housing & $\mathbf{0.0890}_{0.0021}$ & $0.1017_{0.0035}$ & $0.1135_{0.0052}$ & $\underline{0.1005}_{0.0050}$ \\
        House Sales        & $0.0077_{0.0005}$ & $\mathbf{0.0073}_{0.0006}$ & $\underline{0.0075}_{0.0008}$ & $\underline{0.0075}_{0.0004}$ \\
        Airfoil            & $0.0326_{0.0055}$ & $\underline{0.0220}_{0.0041}$ & $0.0712_{0.0065}$ & $\mathbf{0.0209}_{0.0072}$ \\
        Diamonds           & $0.0053_{0.0020}$ & $\mathbf{0.0040}_{0.0018}$ & $\underline{0.0043}_{0.0039}$ & $0.0045_{0.0015}$ \\
        Abalone            & $0.0249_{0.0022}$ & $\mathbf{0.0232}_{0.0025}$ & $0.0243_{0.0024}$ & $\underline{0.0236}_{0.0027}$ \\
        Bike Sharing       & $0.0250_{0.0031}$ & $0.0086_{0.0028}$ & $\underline{0.0081}_{0.0026}$ & $\mathbf{0.0065}_{0.0014}$ \\
        Space GA           & $0.0064_{0.0028}$ & $\mathbf{0.0056}_{0.0032}$ & $\underline{0.0063}_{0.0029}$ & $\mathbf{0.0056}_{0.0030}$ \\
        Airlines Delay     & $0.1605_{0.0050}$ & $\underline{0.1563}_{0.0055}$ & $0.1602_{0.0055}$ & $\mathbf{0.1560}_{0.0052}$ \\

        \midrule
        \multicolumn{5}{c}{\textit{Classification Tasks (Metric: Accuracy $\uparrow$)}} \\
        \midrule
        Credit           & $0.7341_{0.0085}$ & $0.7431_{0.0102}$ & $\underline{0.7441}_{0.0092}$ & $\mathbf{0.7471}_{0.0092}$ \\
        Heloc            & $\underline{0.7107}_{0.0120}$ & $0.7027_{0.0135}$ & $\mathbf{0.7117}_{0.0145}$ & $0.7077_{0.0113}$ \\
        Electricity      & $0.7762_{0.0070}$ & $\mathbf{0.7962}_{0.0085}$ & $\underline{0.7932}_{0.0075}$ & $0.7862_{0.0087}$ \\
        Phoneme          & $0.7742_{0.0210}$ & $\mathbf{0.8292}_{0.0245}$ & $0.7942_{0.0207}$ & $\underline{0.8242}_{0.0252}$ \\
        MagicTelescope   & $0.8444_{0.0088}$ & $\underline{0.8479}_{0.0095}$ & $0.8389_{0.0092}$ & $\mathbf{0.8495}_{0.0079}$ \\
        Vehicle          & $\underline{0.8106}_{0.0340}$ & $0.7576_{0.0380}$ & $0.7676_{0.0362}$ & $\mathbf{0.8176}_{0.0354}$ \\
        Oranges-vs-Grape & $0.9105_{0.0150}$ & $\underline{0.9302}_{0.0210}$ & $0.9235_{0.0160}$ & $\mathbf{0.9344}_{0.0240}$ \\
        Eye Movements    & $0.5699_{0.0135}$ & $0.5819_{0.0155}$ & $\underline{0.5899}_{0.0145}$ & $\mathbf{0.5940}_{0.0164}$ \\
        \bottomrule
    \end{tabular}}
\end{table*}

\end{document}